\documentclass{article} 
\usepackage{iclr2027_conference,times}

\usepackage{hyperref}
\usepackage{url}

\usepackage{booktabs}       
\usepackage{amsfonts}       
\usepackage{nicefrac}       
\usepackage{microtype}      
\usepackage{xcolor}         

\def\a{{\boldsymbol a}}
\def\B{{\boldsymbol B}}

\def\e{{\boldsymbol e}}

\def\g{{\boldsymbol g}}

\def\h{{\boldsymbol h}}
\def\I{{\boldsymbol I}}

\def\J{{\boldsymbol J}}

\def\p{{\boldsymbol p}}

\def\q{{\boldsymbol q}}

\def\u{{\boldsymbol u}}

\def\v{{\boldsymbol v}}

\def\X{{\boldsymbol X}}
\def\x{{\boldsymbol x}}
\def\Y{{\boldsymbol Y}}
\def\y{{\boldsymbol y}}

\def\z{{\boldsymbol z}}

\def\CM{{\mathcal C}}
\def\DM{{\mathcal D}}
\def\EM{{\mathcal E}}

\def\LM{{\mathcal L}}

\def\SM{{\mathcal S}}

\def\CB{{\mathbb C}}

\def\EB{{\mathbb E}}

\def\PB{{\mathbb P}}

\def\RB{{\mathbb R}}

\usepackage{graphicx}
\usepackage{wrapfig}
\usepackage{amsmath}
\usepackage{amssymb}
\usepackage{algorithm,algorithmicx,algpseudocode}
\usepackage{rotating}
\usepackage{multirow} 
\usepackage{pifont}
\usepackage{dsfont}
\usepackage{colortbl}
\usepackage{tikz}

\newtheorem{theorem}{Theorem}
\newtheorem{proof}{Proof}

\newtheorem{assumption}{Assumption}
\newtheorem{lemma}{Lemma}
\newtheorem{definition}{Definition}
\newtheorem{corollary}{Corollary}
\newtheorem{proposition}{Proposition}

\newcommand\unimodal[1]{\cellcolor{gray!25}#1}

\newcommand{\std}[1]{{\scriptsize$\pm$#1}}
\usepackage{makecell}

\usepackage{titletoc}

\title{Confidence Falls Short: Asymmetric Certainty Gains from Optimization Hinder Multimodal Classification}

\author{\textbf{Longfei Huang}$^{1}$\thanks{Equal contribution.}~~\ \
\textbf{Xiangyu Wu}$^{2*}$\ \
\textbf{Yang Yang}$^{1}$\thanks{Corresponding author.} \\
$^{1}$Nanjing University of Science and Technology\\
$^{2}$Alibaba Group
}

\iclrfinalcopy 
\begin{document}

\maketitle

\begin{abstract}
Multimodal learning (MML) falls into the optimization dilemma due to the modality imbalance phenomenon, leading to suboptimal overall performance in practice. While many attempts primarily focus on balancing the optimization dynamics across modalities to address this issue, we identify a subtle yet critical flaw: optimization yields asymmetric gains in predictive certainty, with the strong modality more confident than the weak one, driving imbalanced modality contributions. In this paper, our analysis reveals that this flaw stems from unimodal characteristics rather than multimodal learning, and this confidence discrepancy can be corrected by positive cross-modal intervention. Based on this insight, we propose multimodal Max Confidence Regularization (MaxCR) to dynamically intervene in modality semantic confidence. Specifically, the semantic confidence of each modality is tracked using a nonlinear sparsity measure. We then design max suppression and max excitation based on this measure to regularize strong and weak modalities, respectively. They penalize and encourage the top-1 confidence, thereby constraining multimodal prediction. To this end, strong and weak modalities are expected to make calibrated confidence, thereby improving the overall performance. Empirical experiments on widely used datasets reveal the superiority of our method through comparison with various state-of-the-art (SOTA) multimodal learning baselines.
\end{abstract}

\section{Introduction}
Multimodal learning \cite{TIL:journals/pami/BaltrusaitisAM19,DOMFN:conf/mm/0074ZGGZ22,MML:conf/iccv/Rakib_2025_ICCV,lee2025generalized,tang2026cccaption,liu2025noisyrollout} aims to effectively integrate heterogeneous information to improve the holistic understanding of tasks, which has received growing attention in various real applications \cite{huang2024refining,xiao2025rebalancing,hao2025uni}. Despite expectations of achieving better performance compared with unimodal approaches, multimodal learning has been surprisingly shown to underperform compared to unimodal ones in certain scenarios \cite{OGM:conf/cvpr/PengWD0H22}. The root of this problem lies in the existence of the modality imbalance phenomenon \cite{OGR-GB:conf/cvpr/WangTF20}, where different modalities converge at different rates.

Fortunately, many studies \cite{OGR-GB:conf/cvpr/WangTF20,OGM:conf/cvpr/PengWD0H22,MLA:conf/cvpr/ZhangYBY24,LFM:conf/nips/0074WJ024,BML:conf/ijcai/ZongDLLZ24,MMPareto:conf/icml/WeiH24,BML:journals/pami/WeiHDW25} have explored this modality imbalance issue from various perspectives. As early pioneering studies, G-Blend \cite{OGR-GB:conf/cvpr/WangTF20} and its successor OGM \cite{OGM:conf/cvpr/PengWD0H22}, AGM \cite{AGM:conf/iccv/LiLHLLZ23}, and PMR \cite{PMR:conf/cvpr/Fan0WW023}, etc., focus on designing customized learning strategies for different modalities to intervene in their optimization processes to achieve rebalancing. Other attempts, including MLA \cite{MLA:conf/cvpr/ZhangYBY24}, DI-MML \cite{DI-MML:conf/mm/FanXWLG24}, and ReconBoost \cite{ReconBoost:conf/icml/CongHua24}, focus on decoupling multimodal learning and bridging the learning gap across modalities by injecting optimization information between them. 

Despite significant progress, existing methods mainly adjust gradients to balance optimization dynamics, while overlooking unimodal intrinsic optimization characteristics. Compared with the strong modality, the weak modality often exhibits lower predictive confidence. We investigate this phenomenon on CREMA-D \cite{CREMAD:journals/taffco/CaoCKGNV14}. As shown in Figure \ref{fig:intro} (left), we report accuracy, average top-1 confidence, and the average number of candidate classes with predicted probability $>1/C$, where $C$ is the number of classes. Higher confidence and fewer candidates indicate more certain predictions. Under Naive MML and OGM \cite{OGM:conf/cvpr/PengWD0H22}, substantial gaps exist between audio and video in both metrics. LFM \cite{LFM:conf/nips/0074WJ024} introduces a contrastive loss \cite{CLIP:conf/icml/RadfordKHRGASAM21} to intervene in label fitting, potentially driving modalities toward flatter suboptimal basins while substantially improving overall performance, demonstrating the effectiveness of cross-modal supervision. As shown in Figure \ref{fig:intro} (right), the confidence gap between strong and weak modalities appears on both correctly classified and misclassified samples, suggesting overconfidence in the strong modality and underconfidence in the weak one. In this paper, we analyze this problem through theory and experiments. We find that this discrepancy is not merely induced by multimodal learning, but fundamentally stems from unimodal characteristics. Specifically, for strong modality, the task gradient is positively correlated with the entropy reduction direction, causing predictions to become overly confident. In contrast, the weak modality is approximately orthogonal, resulting in smaller gains in predictive certainty. This asymmetric certainty gain already arises in unimodal learning and persists in multimodal learning. Ideally, each modality should produce confident but not overconfident predictions \cite{maxsup:conf/nips/abs-2502-15798,DBLP:conf/icml/WeiLL0SL22}, enabling effective multimodal fusion.

\begin{figure}[t]\centering
\includegraphics[width=.42\linewidth]{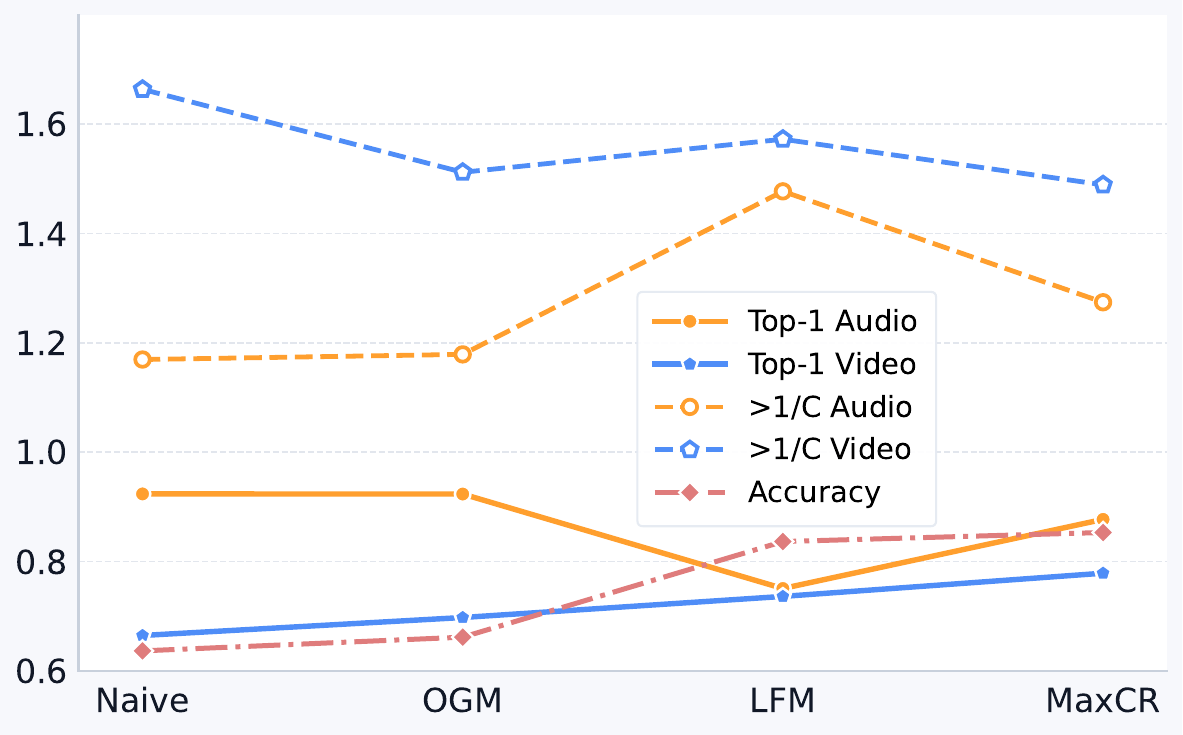} \hspace{0.04\linewidth}
\includegraphics[width=.42\linewidth]{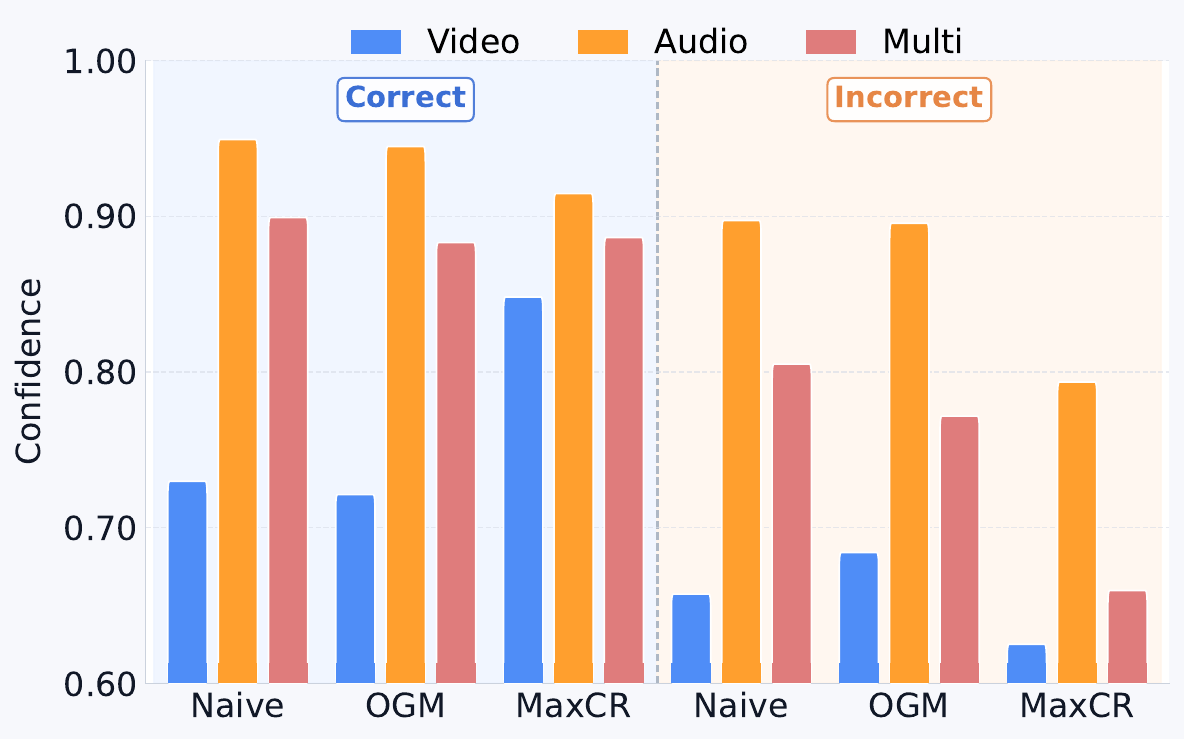} \\
\caption{Comparison of the semantic confidence on CREMAD dataset. Left: We observe that the strong modality (Audio) exhibits higher Top-1 confidence and fewer candidate probabilities $>1/C$, and the weak modality (Video) shows the opposite. Right: MaxCR mitigates underconfidence in correctly classified video samples and overconfidence in misclassified audio samples.}
\label{fig:intro}
\end{figure}

Based on the above insight, we propose a novel multimodal Max Confidence Regularization (MaxCR) approach that leverages cross-modal information to calibrate the unimodal confidence, thereby improving the performance of decision-level multimodal fusion. Concretely, we first adopt a nonlinear sparsity measure to dynamically monitor the semantic confidence of each modality during training. Then, we design max suppression and max excitation based on this monitoring metric to regularize strong and weak modalities, respectively. The former penalizes the top-1 confidence in strong modality, while the latter encourages it in weak ones, thereby balancing multimodal fusion. To this end, MaxCR can calibrate and balance semantic confidence between strong and weak modalities, thereby improving multimodal performance. Extensive experiments on widely used datasets reveal that our proposed method MaxCR achieves competitive performance compared with SOTA baselines.

\section{Related Work}
\subsection{Multimodal Learning}
Multimodal learning \cite{MMDL:conf/icml/NgiamKKNLN11,KMG:conf/ijcai/YangWZX019,TIL:journals/pami/BaltrusaitisAM19} seeks to leverage these multimodal data from diverse sensors to improve task performance. Generally, existing multimodal learning methods can be categorized into early fusion \cite{DOMFN:conf/mm/0074ZGGZ22,Film:conf/aaai/PerezSVDC18,ML-LSTM:journals/mta/NieYSW21}, late fusion \cite{KMG:conf/ijcai/YangWZX019,MSLR:conf/acl/YaoM22,MLA:conf/cvpr/ZhangYBY24}, and hybrid fusion \cite{Hybrid:conf/aaai/ZhengTWH023} based on their fusion strategies. In detail, early fusion methods typically perform joint modeling at the feature level, either by concatenating features to form a joint representation or by applying affine transformations. Representative early fusion methods include G-Blend \cite{OGR-GB:conf/cvpr/WangTF20}, DOMFN \cite{DOMFN:conf/mm/0074ZGGZ22}, and so on. In contrast, late fusion methods tend to model different modalities separately, independently extracting predictions from modality-specific models and integrating them at the decision stage, such as MSLR \cite{MSLR:conf/acl/YaoM22}. Additionally, hybrid fusion methods aim to combine the advantages of early and late fusion, typically introducing interaction mechanisms at different network layers to enable multi-level and multi-granularity cross-modal modeling \cite{Hybrid:conf/aaai/ZhengTWH023}. Despite numerous achievements, these methods may not fully exploit all modalities and produce under-optimized unimodal representations.

\subsection{Imbalanced Multimodal Learning}
Compared to unimodal approaches, multimodal learning is expected to achieve better performance \cite{MML:conf/cvpr/LvCHDL21,MML:conf/nips/SimonyanZ14,MML:conf/cvpr/GaoOGT20}. However, contrary to expectations, multimodal learning often encounters modality imbalance problems \cite{OGR-GB:conf/cvpr/WangTF20,OGM:conf/cvpr/PengWD0H22} in practice, leading to performance degeneration and even underperformance compared to unimodal ones in certain scenarios. Imbalanced multimodal learning aims to rebalance the learning of different modalities, ensuring that each modality is sufficiently exploited. Based on this idea, early pioneering works \cite{OGR-GB:conf/cvpr/WangTF20,OGM:conf/cvpr/PengWD0H22,PMR:conf/cvpr/Fan0WW023,AGM:conf/iccv/LiLHLLZ23} focus more on adaptively adjusting the optimization of different modalities. Representative methods, such as OGM \cite{OGM:conf/cvpr/PengWD0H22} and AGM \cite{AGM:conf/iccv/LiLHLLZ23}, etc., adopt different learning strategies to modulate gradients or learning rates, thereby rebalancing the learning of weak and strong modalities. Other approaches, including MLA \cite{MLA:conf/cvpr/ZhangYBY24}, ReconBoost \cite{ReconBoost:conf/icml/CongHua24}, and IGM \cite{IGM:conf/ijcai/JiangCY25}, take a different direction by focusing on decoupling multimodal learning and bridging the learning gap through injecting optimization information between modalities. In addition, several methods attempt to address the modality imbalance problem from different perspectives. Typically, LFM \cite{LFM:conf/nips/0074WJ024} considers modality imbalance from a label-fitting perspective. AUG \cite{AUG:conf/nips/Jiang2025aug} enhances the classification capability of weak modality to address modality imbalance.

\section{Problem Analysis} \label{sec:ProAna}
In this section, we analyze the reason for the semantic confidence disparity between strong and weak modalities, and attribute it to asymmetric certainty gains from optimization. To bridge analysis and practice, we further provide empirical validation, confirming the existence of the optimization-certainty decoupling, which underlies the asymmetry.

\subsection{Theoretical Analysis}
For weak modality $r$, we assume that its Bayes posterior is $\q^r(\x^r)=\PB(\Y\mid \X^r=\x^r)$, where $\x^r$ is a realization of the random variable $\X^r$. Let $\g_{\mathrm{ce}}=\nabla_{\theta}\LM_{\mathrm{CE}}$ denotes the population cross-entropy gradient, $\g_{e}=\nabla_\theta H$ denotes the entropy gradient, and $\hat \g=\frac{1}{N}\sum_{i=1}^{N}\J_i^{\top}(\p_i-\y_i)$ denote its mini-batch stochastic gradient, where $N$ is the mini-batch size, $\p_i$ is the predicted class-probability vector for the $i$-th sample, $\y_i$ is its one-hot label vector, and $\J_i$ is the Jacobian of the model logits with respect to parameters $\theta$. We further define the excess risk as $\EM(\theta)=\EB_\x\left[\mathrm{KL}(\q^{r}\|\p)\right]$, and introduce the certainty-transfer ratio $r_N(\theta)=\frac{|\langle \g_e,\g_{ce}\rangle|}{\|\g_e\|\sqrt{\EB\|\hat \g\|^2}}$, which measures the proportion of predictable population cross-entropy gradient signal along the certainty-related direction in the stochastic update. $\mathrm{KL}(\cdot\|\cdot)$ is KL divergence. Our analysis builds on a common view: weak modality is typically difficult to model, prone to ambiguity, and even contaminated by noise \cite{view:journals/tmm/MaiSXZH24,view:conf/mm/Ding0025,view:conf/nips/abs-2509-25831,view:conf/nips/gong2025multimoda}. More details and proofs can be found in the appendix.

\begin{theorem}[Optimization-Certainty Decoupling, Informal]
\label{thm:asymmetry}
Under the assumptions that the weak modality is ambiguous, i.e., $\EB_x[H(\q^{r})]\ge H_w>0$, where $H(\q^{r})$ is the entropy and $H_w$ is a constant representing the lower bound, and the model satisfies the regularity conditions. As $\EM(\theta)\to0$,
$\|\g_{\mathrm{ce}}\|\le B\sqrt{2\EM(\theta)}\to0$, $\EB\|\hat{\g}\|^2\ge \frac{\mu\phi_C(H_w)}{N}>0$. Consequently, $r_N(\theta) \le \frac{B\sqrt{2\EM(\theta)}} {\sqrt{2B^2\EM(\theta)+\mu\phi_C(H_w)/N}} \to0$, $\dot{\CM}(\theta)\to0$, where $B$, $N$ and $\mu$ are positive constants, and $\phi_C$ is a monotone function determined by the number of classes $C$.
\end{theorem}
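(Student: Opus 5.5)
The plan is to prove the three claims in order: a vanishing bound on the population gradient, a non-vanishing lower bound on the stochastic second moment, and then the ratio bound together with its consequence for $\dot{\CM}(\theta)$.

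\textbf{Step 1: the population gradient vanishes.} We write the population cross-entropy gradient through the logit Jacobian,
\[
\g_{\mathrm{ce}}=\EB_{\x}\!\left[\J(\x)^{\top}(\p(\x)-\q^{r}(\x))\right].
\]
This holds because, conditionally on $\x$, the expected one-hot label is $\q^{r}(\x)$. The regularity condition we use is $\|\J(\x)\|_{\mathrm{op}}\le B$ uniformly. Then
\[
\|\g_{\mathrm{ce}}\|\le B\,\EB_\x\|\p-\q^r\|_2\le B\,\EB_\x\|\p-\q^r\|_1 .
\]
Pinsker's inequality gives $\|\p-\q^r\|_1\le\sqrt{2\,\mathrm{KL}(\q^r\|\p)}$. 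Applying Jensen to the concave square root then yields $\|\g_{\mathrm{ce}}\|\le B\sqrt{2\EM(\theta)}$, which tends to $0$ as $\EM(\theta)\to0$.

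\textbf{Step 2: the stochastic second moment stays bounded below.} We decompose $\EB\|\hat\g\|^2=\|\g_{\mathrm{ce}}\|^2+\frac1N\mathrm{tr}\,\mathrm{Cov}\big(\J^\top(\p-\y)\big)$, using i.i.d.\ sampling in the mini-batch. We then condition on $\x$. The label noise $\y-\q^r$ has conditional covariance $\mathrm{diag}(\q^r)-\q^r\q^{r\top}$, and it is independent of $\J$ and $\p$ given $\x$. Hence the conditional variance is at least
\[
\EB_\x\,\mathrm{tr}\!\left(\J\J^\top(\mathrm{diag}(\q^r)-\q^r\q^{r\top})\right).
\]
The second regularity condition is $\J\J^\top\succeq\mu\,\Pi$, where $\Pi$ is the projection onto the sum-zero subspace, which contains the label noise. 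Under it, this quantity is at least $\mu\,\EB_\x[1-\|\q^r\|_2^2]$.

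It remains to lower bound $1-\|\q\|^2$, the Gini impurity, by a monotone function of the entropy on the simplex $\Delta_C$. Define
\[
\phi_C(h)=\min\{1-\|\q\|^2:\ H(\q)\ge h\}.
\]
This is nondecreasing in $h$ and strictly positive for $h>0$, since zero Gini impurity forces $\q$ to be a vertex, where the entropy is $0$. To pass from $\EB_\x[H(\q^r)]\ge H_w$ to $\EB_\x[\phi_C(H(\q^r))]\ge\phi_C(H_w)$, I need convexity of $\phi_C$, or else the convex minorant of $\phi_C$ (which I will rename $\phi_C$). Jensen then gives $\EB\|\hat\g\|^2\ge\mu\phi_C(H_w)/N$.

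\textbf{Step 3: the ratio bound.} By Cauchy--Schwarz, $|\langle\g_e,\g_{\mathrm{ce}}\rangle|\le\|\g_e\|\,\|\g_{\mathrm{ce}}\|$, so $r_N\le\|\g_{\mathrm{ce}}\|/\sqrt{\EB\|\hat\g\|^2}$. Using the exact decomposition from Step 2, $\EB\|\hat\g\|^2\ge\|\g_{\mathrm{ce}}\|^2+\mu\phi_C(H_w)/N$. The map $t\mapsto t/\sqrt{t^2+c}$ is increasing, so substituting $t\le B\sqrt{2\EM}$ gives exactly the stated bound
\[
r_N(\theta)\le\frac{B\sqrt{2\EM(\theta)}}{\sqrt{2B^2\EM(\theta)+\mu\phi_C(H_w)/N}}\to0 .
\]
For $\dot{\CM}(\theta)$, I read $\CM$ as the certainty (negative entropy) along SGD with step size $\eta$. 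Its expected first-order change is $-\eta\langle\g_e,\g_{\mathrm{ce}}\rangle$, whose magnitude is at most $\eta\|\g_e\|\sqrt{\EB\|\hat\g\|^2}\,r_N$. Given bounded $\|\g_e\|$ and bounded second moment, which again follow from the regularity conditions, this tends to $0$. I would state that precise definition in the appendix.

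\textbf{Main obstacle.} I expect the hardest step to be the entropy-to-Gini transfer in Step 2. Convexity of the constructed $\phi_C$ is not automatic. I would first try the explicit bound $1-\|\q\|^2\ge 1-\max_k q_k$ combined with Fano-type inequalities relating $\max_k q_k$ to $H(\q)$. I would then take the convex envelope, which is still positive for $h>0$ and monotone, so the informal statement survives.
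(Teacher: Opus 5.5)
Your proposal is correct and follows essentially the same route as the paper. The shared steps are Pinsker plus Jensen for $\|\g_{\mathrm{ce}}\|$; the law of total covariance together with the kernel's well-conditioning on the zero-sum subspace for $\mathrm{tr}\,\Sigma$; the bound $1-\|\q\|_2^2\ge 1-\max_k \q_k$ combined with the Fano-type grouping bound $H(\q)\le H_b(P_e)+P_e\log(C-1)$; and Cauchy--Schwarz for $r_N(\theta)$. Your anticipated obstacle disappears on that Fano route: the paper takes $\phi_C$ to be the inverse of $F_C(e)=H_b(e)+e\log(C-1)$, which is increasing and concave on $[0,1-1/C]$, so $\phi_C$ is automatically increasing and convex and no convex envelope is needed. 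There is also a harmless sign slip: under $\dot\theta=-\eta\g_{\mathrm{ce}}$ one has $\dot\CM=+\eta\langle\g_e,\g_{\mathrm{ce}}\rangle$. The paper also bounds this directly by $\eta\|\g_e\|\,\|\g_{\mathrm{ce}}\|\le\eta B^2M_C\sqrt{2\EM(\theta)}$, where $M_C$ is a uniform bound on $\|\nabla_{\z}H\|$, rather than routing through $r_N$.
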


Theorem \ref{thm:asymmetry} shows that, as a weak modality approaches its Bayes-optimal cross-entropy solution, the population optimization signal vanishes, while irreducible conditional label ambiguity can sustain non-negligible stochastic gradient fluctuations. Consequently, substantial mini-batch gradients do not necessarily correspond to effective learning, since their magnitude may increasingly reflect ambiguity-induced stochastic variation rather than coherent population-level optimization. Meanwhile, the instantaneous certainty gain also approaches zero, indicating that continued parameter updates become progressively less effective at improving predictive certainty. This decoupling can further manifest as weak directional alignment between task optimization and certainty improvement, which we empirically examine in the following section.

\begin{figure}[t]\centering
\includegraphics[width=.45\linewidth]{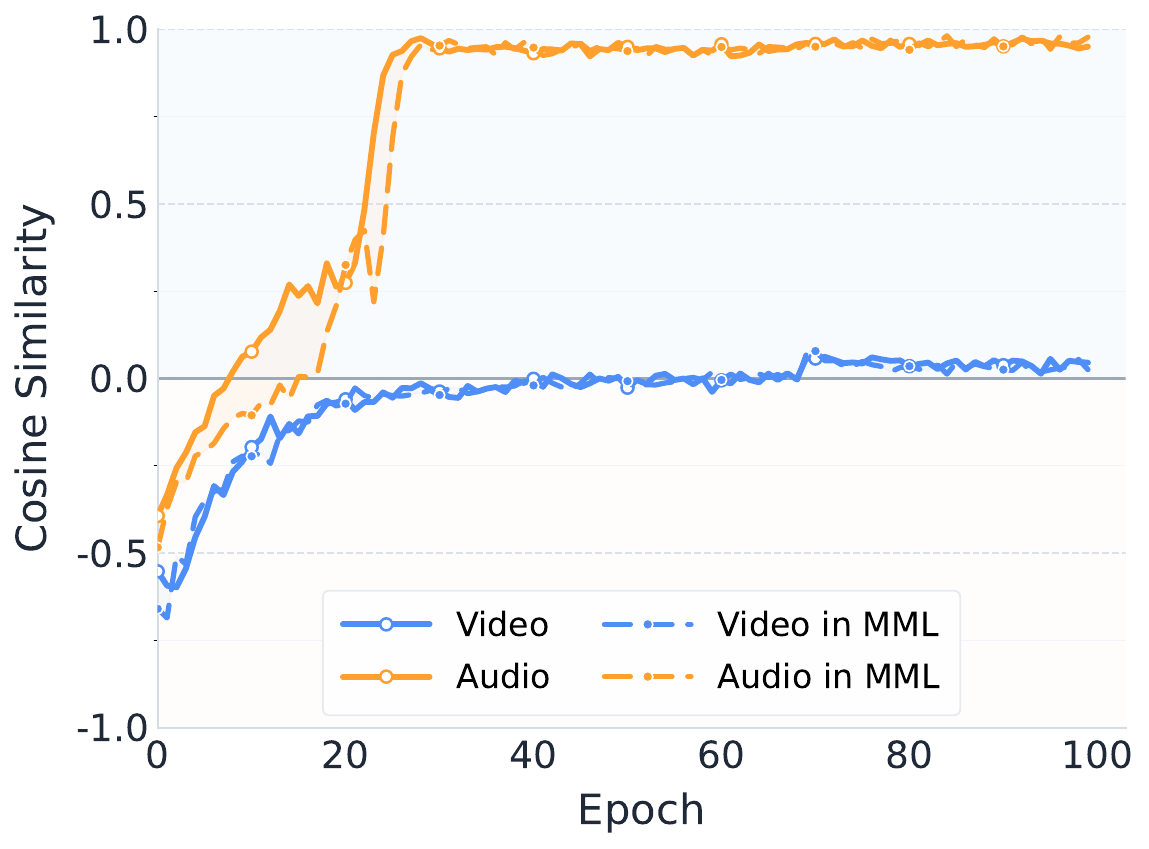} \hspace{0.04\linewidth}
\includegraphics[width=.45\linewidth]{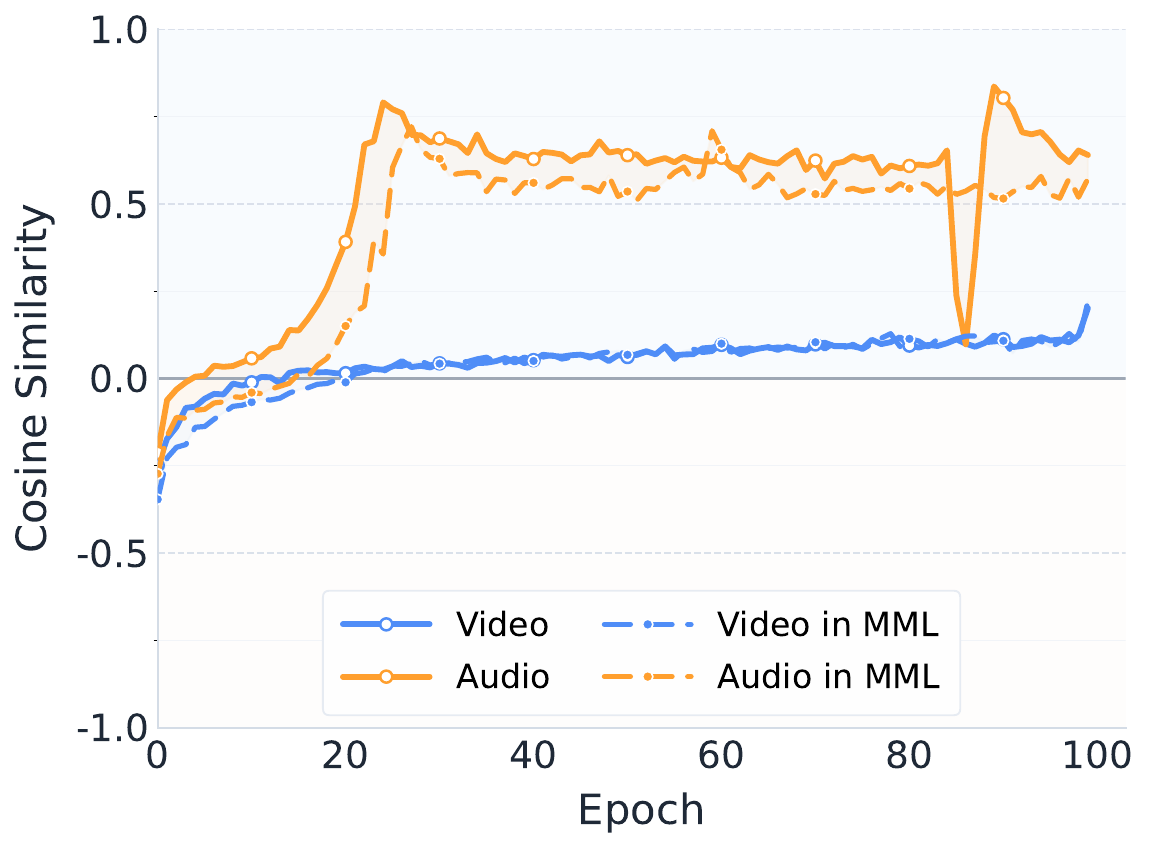} \\
\caption{Cosine similarity analysis between the task gradient and the entropy-reduced gradient on CREMAD (left) and KSounds (right) datasets. In unimodal learning (Audio and Video), the task gradient and the entropy-reduced gradient exhibit high similarity for audio modality, i.e., strong modality, whereas they are orthogonal for video modality, i.e., weak modality. In multimodal learning (Audio in MML and Video in MML), this phenomenon persists, leading to disparity in confidence.}
\label{fig:cosine_similarity}
\end{figure}

\subsection{Empirical Validation} \label{sec:Validation}
Our theoretical analysis attributes the semantic confidence gap between strong and weak modalities to optimization-certainty decoupling. However, this framework relies on some simplifying assumptions to ensure analytical tractability. To bridge the gap between our idealized model and real-world behavior, we conduct empirical investigations. We investigate the directional alignment between task optimization and certainty improvement as an empirical indicator of inefficient certainty transfer.

Concretely, we update the model using the cross-entropy loss and evaluate the cosine similarity between the gradients induced by the unimodal entropy loss and the cross-entropy loss during training. We conduct experiments on CREMAD \cite{CREMAD:journals/taffco/CaoCKGNV14} and KSounds \cite{Kinetics-Sound:conf/iccv/ArandjelovicZ17} datasets. As shown in Figure \ref{fig:cosine_similarity}, the results provide empirical evidence for a directional manifestation of optimization-certainty decoupling. We can observe that in unimodal learning, i.e., Audio and Video, the steepest descent direction of the audio modality is positively correlated with entropy reduction, whereas the cosine similarity of the video modality is close to zero, indicating that $r_N(\theta) \to0$. In multimodal learning, i.e., Audio in MML and Video in MML, this phenomenon also persists. This phenomenon indicates that the strong modality naturally improves its prediction confidence during training, and the weak modality struggles to translate task optimization gains into enhanced prediction certainty. Experimental results on other datasets are provided in the appendix.

\section{Methodology}
\subsection{Preliminaries}
For simplicity, we use two modalities, i.e., audio and video, for illustration. It is worth mentioning that our method can be easily adapted to cases with more than two modalities. Notation definitions and the trimodal details are provided in the appendix.

Assume that we have $M$ data points, each of which has audio and video modalities. Without loss of generality, we use $\DM=\{\x_i^a,\x_i^v, \y_i\}_{i=1}^M$ to denote a training dataset, where $\x_i^a$ and $\x_i^v$ denote the $i$-th data point of audio and video description, respectively. In addition, we are also given a category label $\y_i \in \{ 0,1\}^C$ for each data point, where $C$ denotes the number of category labels. With the training dataset $\DM$, the objective of multimodal learning is to learn a model that fuses the multimodal information and predicts its category label as accurately as possible.

\subsection{Multimodal Framework}
For the sake of simplicity, we use superscript $r$ to indicate the module corresponding to a specific modality, where $r\in\{a,v\}$. Following representative approaches \cite{OGR-GB:conf/cvpr/WangTF20,LFM:conf/nips/0074WJ024,AUG:conf/nips/Jiang2025aug}, we also utilize DNNs to construct our models. Specifically, we use $f^r(\cdot)$ to denote feature extractors, and the features can be calculated by $\h^r=f^r(\x^r;\theta^r)$, where $\theta^r$ denotes the extractor parameters. Then, the prediction of given data can be calculated by a classifier $\psi^r(\cdot)$: $\p^r=\psi^r(\h^r;\phi^r),$
where $\phi^r$ denotes the parameters of the classifier. Based on $\p^r$ and its ground-truth, the objective function can be written as:
\begin{align}
\LM_{CE} \left ( \x^r, \y \right )=\frac{1}{N}\sum_{i=1}^{N}\ell(\p^r_i,\y_i)=-\frac{1}{N}\sum_{i=1}^{N}\y_i^\top\log(\p^r_{i}),\label{obj:mml}
\end{align}
where $\ell(\cdot)$ denotes the cross-entropy loss. Based on this, we decouple multimodal learning to address optimization conflicts \cite{MLA:conf/cvpr/ZhangYBY24,ReconBoost:conf/icml/CongHua24}. We then introduce cross-modal information to regularize modality-specific prediction confidence, thereby injecting optimization information across modalities and narrowing the confidence gap between them.

\subsection{Max Confidence Regularization}
Based on the optimization-certainty decoupling phenomenon, we aim to address modality imbalance from the perspective of predictive confidence. Specifically, instead of directly reshaping optimization, we propose a novel approach, MaxCR, which dynamically identifies confidence discrepancy and regulates the maximum confidence of modality-specific predictions.

\noindent\textbf{Confidence Monitoring. }Since differences between modalities may evolve dynamically \cite{OGM:conf/cvpr/PengWD0H22}, an adaptive score is required to estimate the semantic confidence of each modality. Therefore, we introduce a nonlinear sparsity measure to monitor them, which considers the semantic confidence and the candidate classes. Given a prediction $\p^r$, the sparsity metric can be formally defined as:
\begin{align}
s^r_i=\frac{1}{L} \sum_{\p^r_{ij}\leq\epsilon}^{L} exp(\frac{-\p^r_{ij}}{\tau}),\label{obj:sparsity}
\end{align}
where $\tau \in \left ( 0,1 \right ]$ scales the sparsity metric, and $L$ denotes the number of entries smaller than $\epsilon$. $\epsilon$ is the threshold for entries, with $\epsilon=\frac{1}{C}$. Based on this metric, we compute the sparsity score at $t$-th step on the fly and apply mild smoothing to mitigate early-stage fluctuations: $S^r_t=\frac{t-1}{t} \frac{1}{N} \textstyle \sum_{i \in \B_t} s^r_i + \frac{1}{t} S^r_{t-1}$, where $\B_t$ is a mini-batch with size $N$ and $S_0^r=0$. A larger score indicates that the modality is more confident. Hence, if $S^a_t-S^v_t \geq \sigma$, we suppress the audio modality and excite the video modality, where $\sigma$ is the dead zone for fault tolerance. Conversely, if $S^v_t-S^a_t \geq \sigma$, the video modality is suppressed and the audio modality is excited. We then design the discrepancy ratio $\lambda$:
\begin{align}
\lambda=\left | S^a_t - S^v_t\right |.\label{obj:ratio}
\end{align}

This sparsity metric is more sensitive than entropy when predictions are uncertain and close to uniform, whereas entropy is more sensitive when the probability mass is concentrated on a few classes. Since label smoothing and entropy loss may over-amplify misclassified confidence \cite{maxsup:conf/nips/abs-2502-15798,Entropy:conf/iclr/ZhangBKZZ25}, we adopt a conservative peak regularization, ensuring the combined objective does not reverse the cross-entropy correction for an incorrect top-1 prediction. More details on the effectiveness and conservativeness are provided in the appendix.

\noindent\textbf{Max Suppression. }For over-confident modalities, we reduce excessive certainty. We formulate max suppression, which explicitly penalizes the top-1 confidence. Specifically, given a prediction $\p^r_i$, we compute the maximum probability and compare it with the mean. The loss is defined as follows:
\begin{align}
\LM_{sup}(\x^r, \y)=\frac{1}{N}\sum_{i=1}^{N}\lambda(\max_j \p^r_{ij} - \frac{1}{C}).\label{obj:sup}
\end{align}
This loss reduces the top-1 confidence during optimization and encourages smooth predictions. In this way, max suppression mitigates the overconfidence of strong modality, thereby improving generalization. By combining (\ref{obj:mml}) and (\ref{obj:sup}), the strong modality objective can be defined as:
\begin{align}
\LM_{s}(\x^r, \y)=\LM_{CE}( \x^r, \y )+ \LM_{sup}(\x^r, \y).\label{obj:strong}
\end{align}

\noindent\textbf{Max Excitation. }For under-confident modalities, we encourage sharper predictions. We propose max excitation, which explicitly encourages the top-1 confidence. Similarly, given the prediction $\p^r_i$, we encourage maximum prediction to separate from the mean. The max excitation loss is defined as:
\begin{align}
\LM_{exc}(\x^r, \y)=\frac{1}{N}\sum_{i=1}^{N}\lambda(\frac{1}{C} - \max_j \p^r_{ij}).\label{obj:exc}
\end{align}
This mechanism corrects predictions under high ambiguity, thereby alleviating underconfidence in weak modality. By combining (\ref{obj:mml}) and (\ref{obj:exc}), the weak modality objective can be defined as:
\begin{align}
\LM_{w}(\x^r, \y)=\LM_{CE}( \x^r, \y )+ \LM_{exc}(\x^r, \y).\label{obj:weak}
\end{align}

MaxCR is summarized in Algorithm \ref{algo:ours}. We obtain multimodal prediction by averaging the predictions across all modalities at inference. Combined with MaxCR, multimodal learning can be rebalanced. 

\begin{algorithm}[t]
\caption{Learning algorithm of MaxCR.}\label{algo:ours}
\begin{algorithmic}[1]
\Require{Training dataset $\DM=\{\x_i^a,\x_i^v, \y_i\}_{i=1}^M$.}
\Ensure{The learned DNN models for all modalities.}\\
\textbf{INIT} Initialize DNN parameters $\{\theta^r,\phi^r\}$,$r\in\{a,v\}$. Initialize iteration $t=1$.
\For{$t=1\mapsto\#iterations$}
\State Sample a mini-batch $\B_t=\{\x_i^a,\x_i^v,\y_i\}_{i=1}^{N}$;
\State $\forall \x_i^a,\x_i^v\in\B_t$, calculate predictions $\p^a_{i}$ and $\p^v_{i}$;
\State Calculate sparsity score $\{S^a_t,S^v_t\}$ based on predictions;
\Comment{{\color{blue}Max Confidence Regularization.}}
\State Calculate discrepancy ratio $\lambda$ in (\ref{obj:ratio});
\If{$S^a_t-S^v_t \geq \sigma$} 
\Comment{{\color{blue}Max Suppression and Max Excitation.}}
\State Calculate loss $\LM_{s}(\x^a, \y)$ and $\LM_{w}(\x^v, \y)$ in (\ref{obj:strong}) and (\ref{obj:weak});
\ElsIf{$S^v_t-S^a_t > \sigma$}
\State Calculate loss $\LM_{s}(\x^v, \y)$ and $\LM_{w}(\x^a, \y)$ in (\ref{obj:strong}) and (\ref{obj:weak});
\Else
\State Calculate loss in (\ref{obj:mml});
\EndIf
\State Update DNN parameters $\{\theta^r,\phi^r\}$,$r\in\{a,v\}$;
\EndFor
\end{algorithmic}
\end{algorithm}

\begin{table*}[t]
\centering
\caption{Comparison with SOTA multimodal learning methods. The best results are highlighted in
{\bf bold}. The {\underline{underline}} denotes the second-best performance. The results with gray background denote the performance based on multimodal learning is inferior to that of the best unimodal approach.}
\label{tab:main_results}
\renewcommand\arraystretch{1.3}
\resizebox{\textwidth}{!}{
\begin{tabular}{l|cc|cc|cc|cc|cc|cc}
\toprule
\multirow{2}{*}{\textbf{Method}} &\multicolumn{2}{c|}{CREMAD} &\multicolumn{2}{c|}{KSounds} &\multicolumn{2}{c|}{VGGSound} &\multicolumn{2}{c|}{Twitter} &\multicolumn{2}{c|}{Sarcasm}&\multicolumn{2}{c}{NVGesture}\\
\cmidrule(lr){2-3} \cmidrule(lr){4-5} \cmidrule(lr){6-7} \cmidrule(lr){8-9} \cmidrule(lr){10-11}\cmidrule(lr){12-13}
& Acc. & MAP    & Acc. & MAP    & Acc. & MAP    & Acc. & F1    & Acc. & F1    & Acc. & F1    \\
\midrule
Unimodal-1     & .6317  & .5879 & .5412  & .5669 & .4655 & .4701 & .5863 & .4333 & .7181 & .7073 & .7822 & .7833 \\
Unimodal-2     & .4583  & .6861 & .5562  & .5837 & .3494 & .3478 & .7367 & .6849 & .8136 & .8056 & .7863 & .7865 \\
Unimodal-3     & $-$    & $-$   & $-$    & $-$   & $-$   & $-$   & $-$   & $-$   & $-$   & $-$   & .8154 & .8183 \\\midrule
Concat         & .6361  & \unimodal{.6841} & .6455  & .7130 & .5116 & .5352 & \unimodal{.7011} & \unimodal{.6386} & .8286 & .8240 & .8237 & .8270 \\
Affine         & .6626  & .7193 & .6424  & .6931 & .5001 & .5155 & \unimodal{.7203} & \unimodal{.5992} & .8240 & .8188 & .8278 & .8281 \\
ML-LSTM        & \unimodal{.6290}  & \unimodal{.6473} & .6394  & .6902 & .4966 & .5139 & \unimodal{.7068} & \unimodal{.6564} & .8277 & .8205 & .8320 & .8330 \\\midrule
G-Blend        & .6465  & .7392 & .6722  & .7274 & .5086 & .5555 & \unimodal{.7309} & \unimodal{.6799} & .8286 & .8215 & .8299 & .8305 \\
MSLR           & .6868  & .7412 & .6756  & .7282 & .4987 & .5415 & \unimodal{.7232} & \unimodal{.6382} & .8439 & .8378 & .8237 & .8284 \\
OGM            & .6612  & .7372 & .6582  & .7159 & .4829 & .4978 & \unimodal{.7058} & \unimodal{.6435} & .8360 & .8293 & $-$   & $-$   \\
PMR            & .6659  & .7058 & .6675  & .7274 & \unimodal{.4647} & .4866 & \unimodal{.7357} & \unimodal{.6636} & .8310 & .8256 & $-$   & $-$   \\
AGM            & .6733  & .7807 & .6791  & .7388 & .4711 & .5198 & \unimodal{.7261} & \unimodal{.6502} & .8306 & .8293 & .8279 & .8284 \\
MMPareto       & .7487  & .8535 & .7000  & .7850 & .5125 & .5473 & \unimodal{.7358} & \unimodal{.6729} & .8348 & .8284 & .8382 & .8424 \\
SMV            & .7872  & .8417 & .6900  & .7426 & .5031 & .5362 & .7428 & \unimodal{.6817} & .8418 & .8368 & .8352 & .8341 \\
MLA            & .7943  & .8572 & .7004  & \underline{.7945} & .5165 & .5473 & \unimodal{.7352} & \unimodal{.6713} & .8426 & .8348 & .8340 & .8372 \\
DI-MML         & .8158  & .8592 & .7203  & .7426 & .5173 & .5479 & \unimodal{.7248} & \unimodal{.6686} & .8411 & .8315 & $-$   & $-$   \\
ReconBoost     & .7557  & .8140 & .6855  & .7662 & .5097 & .5387 & .7442 & \unimodal{.6832} & .8437 & .8317 & .8386 & .8434\\
LFM            & .8362  & .9006 & .7253  & .7897 & .5274 & .5694 & .7501 & \bf .7057 & .8497 & .8457 & .8436 & .8468 \\
AMSS+          & .7030  & .7641 & .7255  & .7913 & $-$ & $-$ & \bf .7558 & \underline{.6981} & .8435 & .8377 & .8464 & .8494   \\
InfoReg        & .7498  & .8603 & .7189  & .7986 & .5289 & .5712 & .7403 & .6847 & .8479 & .8396 & $-$ & $-$   \\
AUG            & \underline{.8515}  & \underline{.9103} & \underline{.7263}  & .7901 & \underline{.5301} & \bf .5826 & {.7512} & .6962 & \underline{.8510} & \underline{ .8458} & \underline{.8501} & \underline{.8533}   \\\midrule
MaxCR          & \makecell{\bf .8531 \\ \std{0.0044}}  & \makecell{\bf .9216 \\ \std{0.0046}} & \makecell{\bf .7522 \\ \std{0.0035}}  & \makecell{\bf .8102 \\ \std{0.0067}} & \makecell{\bf .5416 \\ \std{0.0011}} & \makecell{\underline{.5794} \\ \std{0.0032}} & \makecell{\underline{.7517} \\ \std{0.0028}} & \makecell{.6934 \\ \std{0.0043}} & \makecell{\bf .8518 \\ \std{0.0021}} & \makecell{\bf .8459 \\ \std{0.0016}} & \makecell{\bf .8651 \\ \std{0.0062}}   & \makecell{\bf .8680 \\ \std{0.0043}}   \\
\bottomrule
\end{tabular}
}
\end{table*}

\section{Experiments}
\subsection{Experimental Setup} \label{exp:setup}
{\bf Dataset:} \textbf{CREMAD} \cite{CREMAD:journals/taffco/CaoCKGNV14} is an audio-visual dataset for emotion recognition, containing 7,442 video clips from 91 actors speaking several short words. \textbf{KSounds} \cite{Kinetics-Sound:conf/iccv/ArandjelovicZ17} is designed for human action recognition, selected from Kinetics dataset, and contains 31 action classes. \textbf{VGGSound} \cite{VGGSound:conf/icassp/ChenXVZ20} is a large-scale audio-visual dataset in the wild with 309 classes and nearly 200k 10-second video clips. \textbf{Twitter} \cite{Twitter15:conf/ijcai/Yu019} is an image-text dataset designed for emotion recognition, consisting of tweets collected from Twitter with three classes. \textbf{Sarcasm} \cite{Sarcasm:conf/acl/CaiCW19} dataset is designed for sarcasm detection, consisting of 24,635 image-text pairs. \textbf{NVGesture} \cite{NVGeasture:conf/cvpr/MolchanovYGKTK16} is a trimodal dataset for gesture recognition, containing 1,532 dynamic gestures. More details are provided in the appendix.

{\bf Baselines:} We select two categories of methods for comparison, i.e., traditional multimodal fusion methods and rebalanced multimodal learning methods. In detail, traditional multimodal fusion methods include concatenation fusion (Concat), affine transformation fusion (Affine) \cite{Film:conf/aaai/PerezSVDC18}, and multi-layer LSTM fusion (ML-LSTM) \cite{ML-LSTM:journals/mta/NieYSW21}. Rebalanced multimodal learning methods include G-Blend \cite{OGR-GB:conf/cvpr/WangTF20}, MSLR \cite{MSLR:conf/acl/YaoM22}, PMR \cite{PMR:conf/cvpr/Fan0WW023}, AGM \cite{AGM:conf/iccv/LiLHLLZ23}, MMPareto \cite{MMPareto:conf/icml/WeiH24}, SMV \cite{SMV:conf/cvpr/YakeRZD24}, MLA \cite{MLA:conf/cvpr/ZhangYBY24}, DI-MML \cite{DI-MML:conf/mm/FanXWLG24}, ReconBoost \cite{ReconBoost:conf/icml/CongHua24}, LFM \cite{LFM:conf/nips/0074WJ024}, AMSS+ \cite{AMSS:yang2025learning}, InfoReg \cite{InfoReg:huang2025adaptive}, AUG \cite{AUG:conf/nips/Jiang2025aug}.

{\bf Evaluation Protocols:} We adopt accuracy, mean average precision (MAP), and MacroF1 as evaluation metrics. The accuracy measures the proportion of correct predictions of total predictions. MAP returns the average precision of all samples. MacroF1 calculates the average F1 across all categories.

{\bf Implementation Details: }Following \cite{OGM:conf/cvpr/PengWD0H22}, we employ ResNet18 \cite{ResNet:conf/cvpr/HeZRS16} as the backbone to encode audio and video for CREMAD, KSounds and VGGSound datasets. All the parameters of the backbone are randomly initialized. For NVGesture dataset, we employ the I3D \cite{I3D:conf/cvpr/CarreiraZ17} as unimodal branch following the setting of \cite{nvGesture:conf/icml/WuJCG22}. We initialize the encoder with the pre-trained model trained on ImageNet. We use SGD with a momentum of 0.9 and a weight decay of 1e-4 as the optimizer. The learning rate is initialized at 1e-2 and decayed by a factor of 10 when the training loss reaches saturation for CREMAD, KSounds, VGGSound, and NVGesture datasets. For Twitter and Sarcasm datasets, following \cite{Twitter15:conf/ijcai/Yu019,Sarcasm:conf/acl/CaiCW19}, we adopt BERT \cite{BERT:conf/naacl/DevlinCLT19} as the text encoder and ResNet50 \cite{ResNet:conf/cvpr/HeZRS16} as the image encoder. We use Adam \cite{Adam:journals/corr/KingmaB14} as the optimizer, with an initial learning rate of 2e-5. For all datasets, $\tau$ is selected validation set from $\{\frac{1}{C}, \frac{1}{2C}, \frac{1}{3C}, \frac{1}{4C}, \frac{1}{5C}\}$ with the default set to $\frac{1}{C}$, while $\sigma$ is set to be 0.1, where $C$ is the number of categories. For comparison methods, the source codes of all baselines are kindly provided by their authors, and all baselines adopt the same backbone and initialization strategy. We conduct experiments with three random seeds. All experiments are conducted on NVIDIA GeForce RTX 4090, and all models are implemented with PyTorch.

\subsection{Main Results}

{\bf Comparison with Multimodal Baseline:} The main results on all datasets are reported in Table \ref{tab:main_results}. Unimodal-1/2 respectively denote the audio/video for CREMAD and KSounds, and image/text for Twitter and Sarcasm. Unimodal-1/2/3 denotes the RGB/OF/Depth modality for NVGesture dataset, respectively. Furthermore, the results with a gray background indicate that the performance based on multimodal learning is inferior to that of the best unimodal approach. First, the results show that our method consistently achieves competitive performance on all datasets. In contrast, compared with direct unimodal learning, some multimodal baseline methods fail to yield improvements and even produce worse results. Moreover, results on NVGesture dataset demonstrate that MaxCR effectively addresses challenges in scenarios involving more than two modalities and achieves best performance.

\begin{figure*}[t] 
\begin{minipage}[t]{0.49\textwidth}\centering
\includegraphics[width=.49\linewidth]{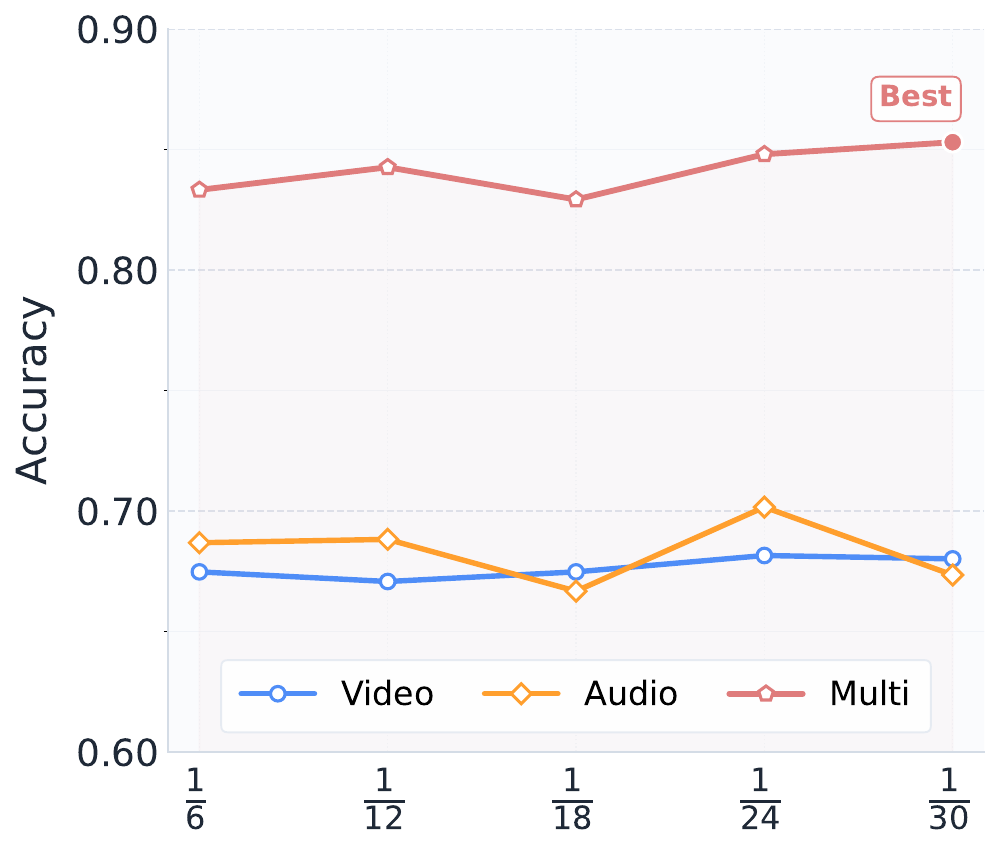}
\includegraphics[width=.49\linewidth]{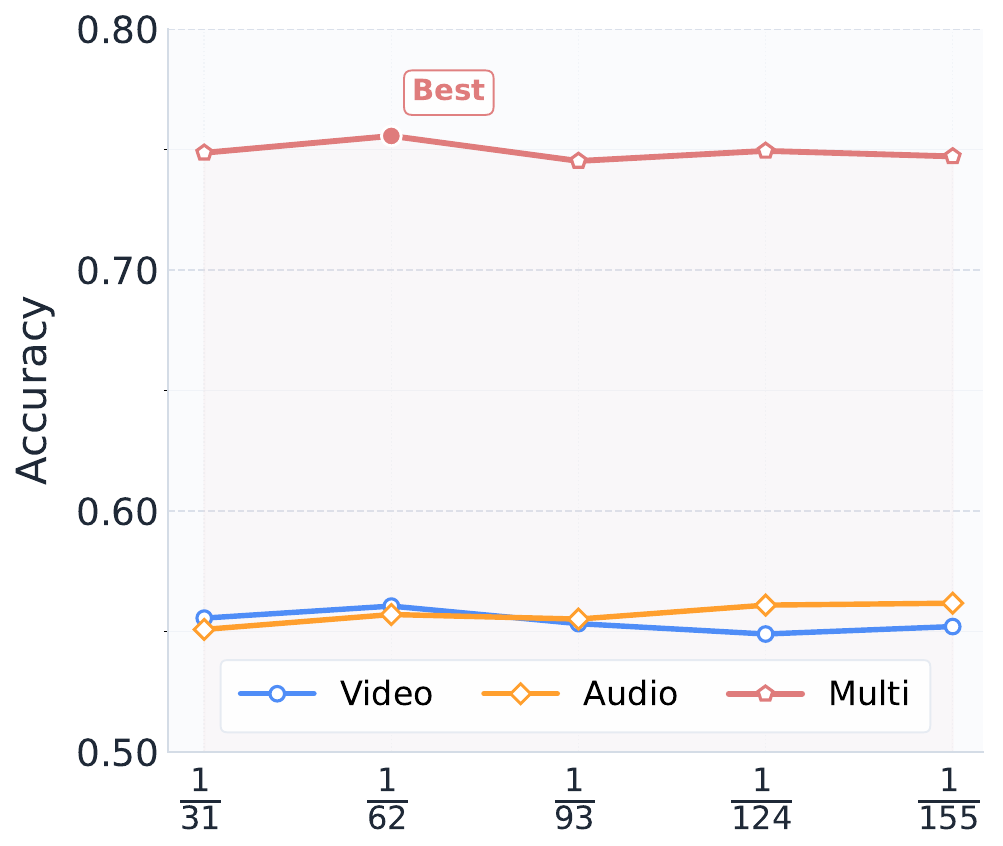} \\
\caption{Sensitivity analysis about $\tau$ on CREMAD (left) and KSounds (right) datasets.}
\label{fig:sensitivity}
\end{minipage}
\hfill
\begin{minipage}[t]{0.49\textwidth}\centering
\includegraphics[width=.49\linewidth]{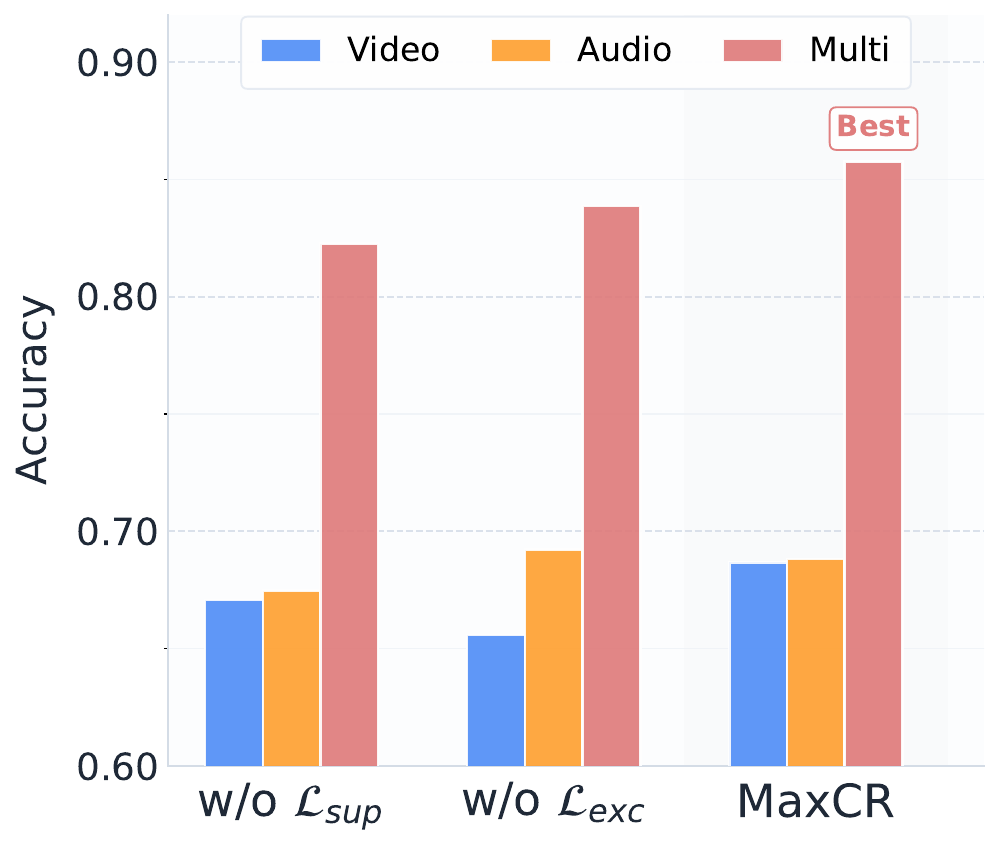}
\includegraphics[width=.49\linewidth]{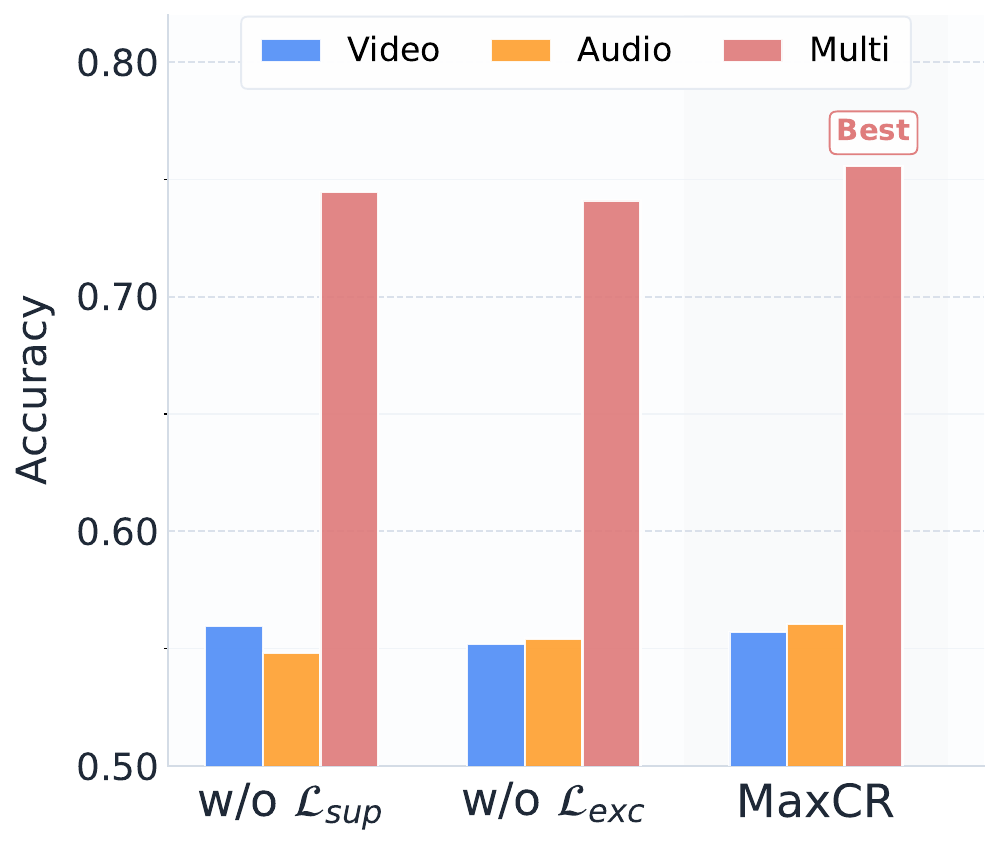} \\
\caption{Ablation study of MaxCR on CREMAD (left) and KSounds (right) datasets.}
\label{fig:ablation}
\end{minipage}
\end{figure*}

\begin{table}[t]
\centering
\begin{tabular}{cc}
\begin{minipage}{.45\textwidth}
\caption{Performance with different calibration metrics on KSounds dataset.}
\label{tab:ece}
\centering
\renewcommand\arraystretch{1.3}
\resizebox{\textwidth}{!}{
\begin{tabular}{c|cccc}
\toprule
Method    & Acc   & ECE  & NLL & Brier  \\\midrule
Naive     & 0.6455  & 0.1607  & 1.5206 & 0.5167     \\
OGM       & 0.6582   & 0.1498  & 1.4561 & 0.5009      \\
MaxCR     & \bf 0.7522  & \bf 0.0384  & \bf 0.9195 & \bf 0.3466       \\
\bottomrule
\end{tabular}
}
\end{minipage}
&
\begin{minipage}{.45\textwidth}
\caption{Performance with different monitoring metrics on KSounds dataset.}
\label{tab:score}
\centering
\renewcommand\arraystretch{1.3}
\resizebox{\textwidth}{!}{
\begin{tabular}{c|ccc}
\toprule
Method     &  Audio & Video & Multimodal  \\\midrule
Confidence      & 0.5322 & 0.5354 & 0.7317 \\
Entropy     & 0.5385 & 0.5443 & 0.7352 \\
Sparsity (Our)      & \bf 0.5570 & \bf 0.5605 & \bf 0.7522 \\
\bottomrule
\end{tabular}
}
\end{minipage} 
\end{tabular}
\end{table}

\subsection{Sensitivity and Ablation Study}
{\bf Sensitivity to the scaling factor $\tau$: }We explore the influence of scaling factor $\tau$ on CREMAD and KSounds datasets. We present the accuracy with different $\tau \in \left [ \frac{1}{5C}, \frac{1}{C}\right ]$ in Figure \ref{fig:sensitivity}. Although the optimal $\tau$ varies across datasets, MaxCR remains insensitive to the scaling factor $\tau$ in a large range.

{\bf Ablation Study:} We analyzed the impact of losses $\LM_{sup}$ and $\LM_{exc}$, i.e., Equation (\ref{obj:sup}) and (\ref{obj:exc}), on CREMAD and KSounds datasets to investigate the effectiveness of MaxCR. As shown in Figure \ref{fig:ablation}, we can observe that when the max suppression loss is removed (w/o $\LM_{sup}$), the accuracy of the strong modality, i.e., Audio, decreases. Conversely, when the max excitation loss is removed (w/o $\LM_{exc}$), the accuracy of the weak modality, i.e., Video, declines. When both max suppression and max excitation are used together, the two modalities achieve balance, leading to the best multimodal performance.

\subsection{Further Analysis}
 {\bf Calibration Metrics Evaluation:} We incorporate the Expected Calibration Error (ECE), Negative Log-Likelihood (NLL), and Brier Score (Brier) metrics to evaluate confidence. Fundamentally, ECE quantifies the discrepancy between predictive confidence and empirical accuracy. NLL measures the probability assigned to the ground-truth label, penalizing confident mispredictions, and the Brier Score captures the squared deviation between predicted probabilities and the true label distribution. We conduct experiments on KSounds dataset. As reported in Table \ref{tab:ece}, the results compellingly demonstrate that, beyond performance gains, MaxCR yields more reliable semantic confidence.

{\bf Selection of Monitoring Metric:} We introduce a sparsity metric to quantify the confidence discrepancy across modalities. To some extent, entropy and label confidence can also serve as monitoring metrics. To evaluate the effectiveness of entropy and confidence, we conduct experiments on KSounds dataset. Specifically, we use normalized entropy and the average label confidence to ensure the same numerical scale. As shown in Table \ref{tab:score}, the entropy achieves better performance than confidence, while the sparsity metric achieves the best accuracy. More results are provided in the appendix.

\begin{figure*}[t] 
\begin{minipage}[t]{0.33\linewidth}\centering
\includegraphics[width=\linewidth]{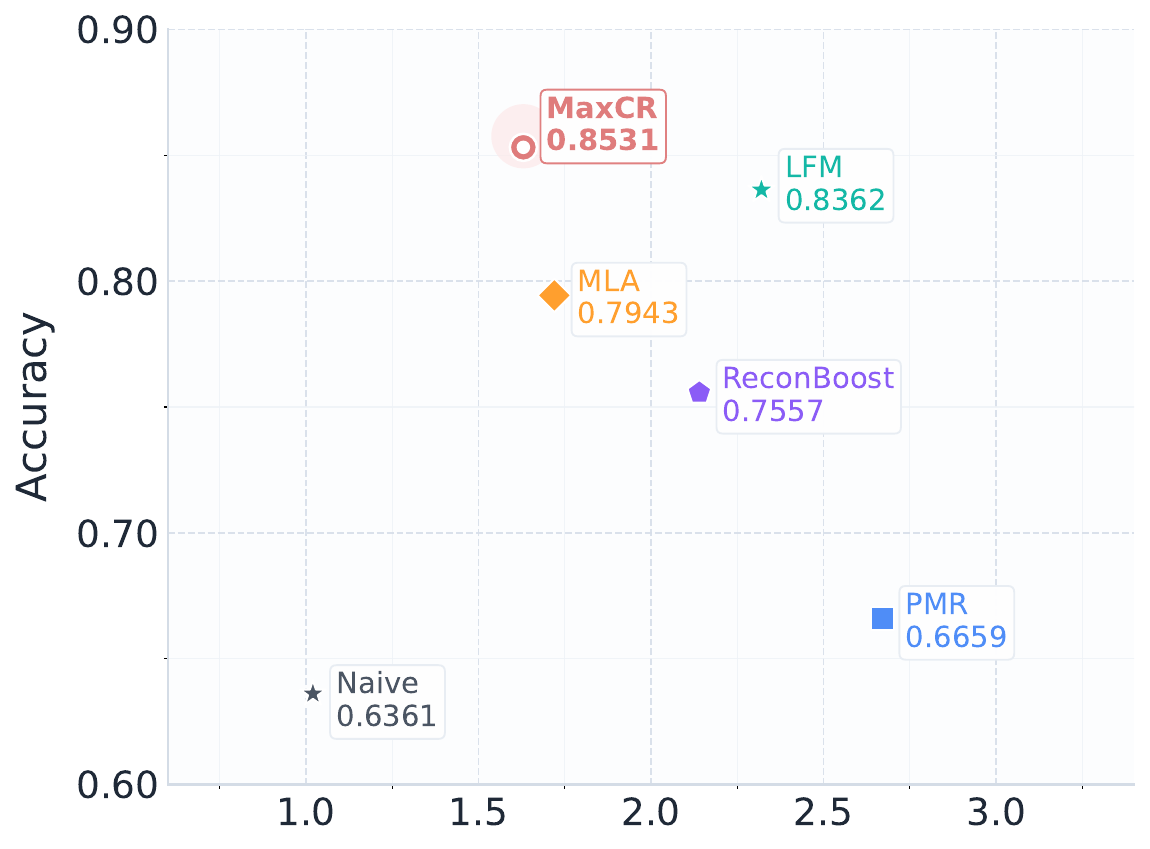} \\
\caption{Training time (hrs).}
\label{fig:train-time}
\end{minipage}\hfill
\begin{minipage}[t]{0.33\linewidth}\centering
\includegraphics[width=\linewidth]{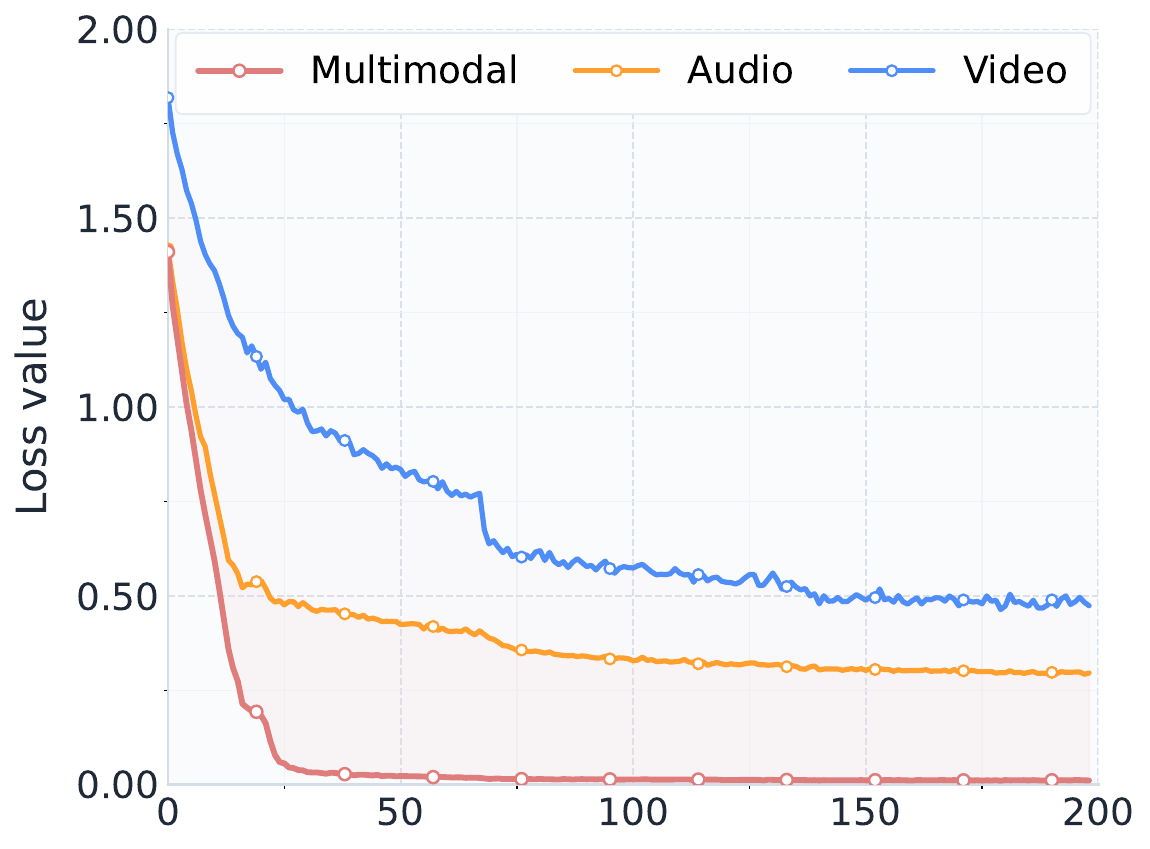} \\
\caption{Loss visualization.}
\label{fig:loss}
\end{minipage}\hfill
\begin{minipage}[t]{0.33\linewidth}\centering
\includegraphics[width=\linewidth]{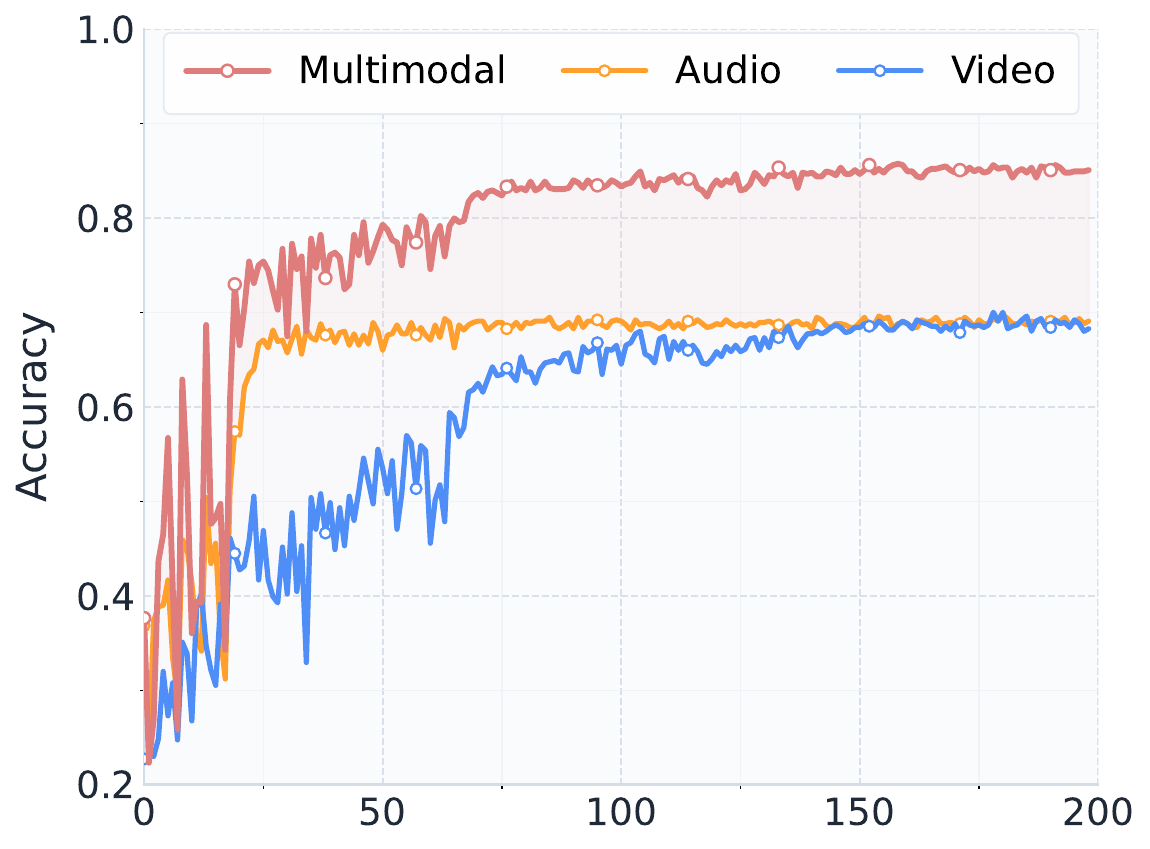} \\
\caption{Accuracy visualization.}
\label{fig:accuracy}
\end{minipage}
\end{figure*}

{\bf Max Suppression vs. Label Smoothing:} Since max suppression loss is designed to mitigate overconfidence, it naturally relates to label smoothing, which combines a uniform distribution with hard ground-truth labels to alleviate overconfidence. We experiment on KSounds dataset to examine whether label smoothing yields gains. Specifically, label smoothing is dynamically applied to the strong modality during training. The results in Table \ref{tab:ls} show that the improvement brought by label smoothing (w/ LS) is weaker than that of the proposed max suppression (w/ Sup) strategy. Max suppression improves the performance of the audio modality and enhances its collaboration with the video modality. This may be because max suppression regularizes the peaks of model predictions, rather than uniformly suppressing the ground-truth class.

{\bf Max Excitation vs. Entropy Minimization:} Since max excitation loss encourages more confident predictions, its objective is closely related to that of entropy minimization. Therefore, we further compare the max excitation loss with the entropy minimization loss on KSounds dataset. Specifically, we apply entropy minimization loss to the weak modality. As shown in Table \ref{tab:em}, the gain from entropy minimization (w/ EM) is smaller than that from max excitation (w/ Exc).

\begin{table}[t]
\centering
\begin{tabular}{cc}
\begin{minipage}{.45\textwidth}
\caption{Comparison of label smoothing and max suppression on KSounds dataset.}
\label{tab:ls}
\centering
\renewcommand\arraystretch{1.3}
\resizebox{\textwidth}{!}{
\begin{tabular}{c|ccc}
\toprule
Method     &  Audio & Video & Multimodal  \\\midrule
w/ LS      & 0.5466 & 0.5226 & 0.7286 \\
w/ Sup     & 0.5543 & 0.5520 & 0.7410 \\
MaxCR      & \bf 0.5570 & \bf 0.5605 & \bf 0.7522 \\
\bottomrule
\end{tabular}
}
\end{minipage}
&
\begin{minipage}{.45\textwidth}
\caption{Comparison of entropy minimization and max excitation on KSounds dataset.}
\label{tab:em}
\centering
\renewcommand\arraystretch{1.3}
\resizebox{\textwidth}{!}{
\begin{tabular}{c|ccc}
\toprule
Method     &  Audio & Video & Multimodal  \\\midrule
w/ EM      & 0.5474 & 0.5280 & 0.7248 \\
w/ Exc     & 0.5481 & 0.5597 & 0.7445 \\
MaxCR      & \bf 0.5570 & \bf 0.5605 & \bf 0.7522 \\
\bottomrule
\end{tabular}
}
\end{minipage} 
\end{tabular}
\end{table}

{\bf Training Overhead:} We empirically compare the training overhead of MaxCR with that of competing multimodal learning baselines under the same experimental setting, including Naive, PMR, MLA, ReconBoost, and LFM. The results are shown in Figure \ref{fig:train-time}, where training times are reported in hours. It can be observed that MaxCR achieves the best accuracy with limited training time.

{\bf Convergence:} In Figure \ref{fig:loss}, we present the convergence behavior of MaxCR on CREMAD dataset during training. It can be observed that the loss gradually converges as training proceeds without getting stuck. In addition, Figure \ref{fig:accuracy} reports the accuracy throughout training. We can observe that the accuracy exhibits a similar convergence trend.

\section{Conclusion} \label{sec:conclusion}
In this paper, we revisit modality imbalance in multimodal classification from the perspective of predictive certainty. We show that optimization yields asymmetric certainty gains: strong modalities become increasingly confident, whereas weak modalities struggle to translate optimization progress into predictive certainty. We characterize this behavior as optimization-certainty decoupling, which emerges in unimodal learning and persists in multimodal learning. Motivated by this insight, we propose multimodal Max Confidence Regularization (MaxCR), which dynamically regulates modality-specific predictions. Specifically, MaxCR uses a nonlinear sparsity measure to monitor semantic confidence and applies max suppression and max excitation to regularize the top-1 confidence of strong and weak modalities, respectively. To this end, MaxCR encourages confident yet not overconfident predictions, improving multimodal fusion. Experiments on widely used datasets demonstrate the effectiveness of MaxCR. Overall, our results highlight predictive certainty as an important yet underexplored dimension of modality imbalance and establish confidence-aware intervention as a simple yet effective complement to balance optimization strategies.

{\bf Limitations: } Our proposed method MaxCR focuses on multimodal classification. Extending it to other multimodal tasks may require task-specific confidence measures and regularization objectives. In addition, our theoretical analysis provides a tractable characterization of optimization-certainty decoupling, while a more general treatment of certainty evolution throughout optimization warrants further investigation. We leave a more comprehensive investigation as future work.



\bibliography{iclr2027_conference}
\bibliographystyle{iclr2027_conference}

\clearpage
\appendix
\renewcommand{\thefigure}{\Roman{figure}}
\renewcommand{\thetable}{\Roman{table}}

\makeatletter
\renewcommand{\theHtable}{A\arabic{table}}
\renewcommand{\theHfigure}{A\arabic{figure}}
\makeatother

\setcounter{figure}{0}
\setcounter{table}{0}
\setcounter{theorem}{0} 
\setcounter{lemma}{0}
\setcounter{assumption}{0}
\setcounter{definition}{0}
\setcounter{corollary}{0}
\setcounter{proposition}{0}

\colorlet{tocblue}{blue!60!white}

\startcontents[appendix]

\section*{Appendix Contents}

\begingroup
\color{tocblue}
\printcontents[appendix]{}{1}{%
    \setcounter{tocdepth}{2}%
}
\endgroup

\clearpage

\section{Theoretical Details and Proofs} \label{app:proof}

\subsection{Preliminaries and Assumptions}
For clarity, we consider two modalities. Let the parameters of the $r$-th modality be $\theta^r\in\RB^{d}$. The network produces logits $\z(\x^r;\theta)\in\RB^{C}$, with predictions $\p=\mathrm{Softmax}(\z)$. The Jacobian is defined as $\J_\theta(x^r)=\partial \z/\partial\theta\in\RB^{C\times d}$, and the sample-wise neural tangent kernel (NTK) is $K_\theta(\x^r,{\x^r}')=\J_\theta(\x^r)J_\theta({\x^r}')^\top$. Define the zero-sum subspace as $\SM_0=\{\v\in\RB^C:\mathbf 1^\top \v=0\}$, $ \Pi_0=\I-\tfrac1K\mathbf 1\mathbf 1^\top$. We further define $\g_{ce}=\nabla_\theta\LM_{CE}$, $\g_{e}=\nabla_\theta H$ and certainty $\CM(\theta)\triangleq- \EB_x[H(\p)]$.

\begin{assumption}[Modality Ambiguity]\label{ass:app_Ambiguity}
Let $\q^r(\x^r)=\PB(\y\mid \x^r)\in\Delta^{C-1}$ denote the Bayes posterior of $r$-modality. There exist constants $H_s<H_w$ such that $\EB_{\x}[H(\q^{a})]\le H_s$, $\EB_{\x}[H(\q^{v})]\ge H_w$.
Equivalently, $H(\Y\mid \X^{a})\ll H(\Y\mid \X^{v})$, i.e., the strong modality induces nearly deterministic labels, whereas the labels conditioned on the weak modality are intrinsically ambiguous.
\end{assumption}

\begin{assumption}[Regularity]\label{ass:app_Regularity}
There exist constants $B,\mu$ such that, for all $x$, $|\J_\theta(\x)|2\le B$, and the kernel is well-conditioned on $\SM_0$: $\mu\Pi_0\preceq \Pi_0K\theta(x,x)\Pi_0 \preceq L\Pi_0$, $\mu>0$.
\end{assumption}

The population cross-entropy loss admits the decomposition $\LM_{CE}(\theta)=\EB_\x[H(\q)]+\EB_\x[\mathrm{KL}(\q\|\p)]$. We define the excess risk as $\EM(\theta)=\EB_x[\mathrm{KL}(\q\|\p)]$.

Under gradient flow $\dot\theta=-\eta \g_{ce}$, we have the identity 
\begin{equation}
\begin{aligned}
\dot \CM(\theta)=\eta\langle \g_e,\g_{ce}\rangle.
\end{aligned}
\end{equation}
Let $\a(\p)=\log \p+H(\p)\mathbf 1$, which satisfies $\langle \p,\a\rangle=0$. Since $\nabla_\z H=-\p\odot \a$ and $\nabla_\z\ell=\p-\y$, define 
\begin{equation}
\begin{aligned}
\Psi(\p,\y) \triangleq \langle \nabla_\z H, \nabla_\z\ell\rangle=\p_y\a_y-\Gamma(\p),
\end{aligned}
\end{equation}
where $\Gamma(\p)=\sum_j \p_j^2\a_j=\mathrm{Cov}_{j\sim \p}(\p_j,\log \p_j)\ge 0$. $\Psi$ is the sample-wise certainty transfer coefficient. Under an isotropic kernel, $\dot \CM=\eta \EB[\Psi]$. Its sign is controlled by a candidate-set criterion determined solely by the predictive distribution.

\begin{definition}[Candidate Class Set]
$\CB(\p)=\{j:\ \p_j>e^{-H(\p)}\}$. We always have $\arg\max_j \p_j\in\CB(\p)$.
\end{definition}

The mini-batch gradient $\hat \g=\frac1N\sum_i\J_i^\top(\p_i-\y_i)$
satisfies $\EB\hat \g=\g_{ce}$ and $\EB\|\hat \g\|^2=\|\g_{ce}\|^2+\frac1N\mathrm{tr}\Sigma$. To handle arbitrary \(C\ge2\) in a unified manner, define
\begin{equation}
\begin{aligned}
F_C(e)\triangleq H_b(e)+e\log(C-1), e\in[0,1-1/C],
\end{aligned}
\end{equation}
and its inverse
\begin{equation}
\begin{aligned}
\phi_C\triangleq F_C^{-1}:[0,\log C]\to[0,1-1/C].
\end{aligned}
\end{equation}
Over this interval, $F_C$ is monotonically increasing and concave, and hence $\phi_C$ is monotonically increasing and convex.

We further define the effective certainty transfer rate:
\begin{equation}
\begin{aligned}
r_N(\theta)\triangleq\frac{|\langle \g_e,\g_{ce}\rangle|}{\|\g_e\|\sqrt{\EB\|\hat \g\|^2}}.
\end{aligned}
\end{equation}

\subsection{Auxiliary Lemmas and Proofs}

We have the following basic identity:

\begin{lemma}\label{lemma:app_1}
We have $\nabla_\z H(\p)=-\p\odot \a$, where $\a=\log \p+H(\p)\mathbf 1$. Moreover, $\langle \p,\a\rangle=0$, $\mathbf 1^\top\nabla_\z H=0$.
\end{lemma}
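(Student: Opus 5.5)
The lemma is a direct computation with the softmax Jacobian, so the plan is a short chain of identities. Write $H(\p)=-\sum_j \p_j\log\p_j$ and $\p=\mathrm{Softmax}(\z)$. The softmax Jacobian is
\[
\frac{\partial \p_j}{\partial \z_k}=\p_j(\delta_{jk}-\p_k),
\]
that is, $\partial\p/\partial\z=\mathrm{diag}(\p)-\p\p^\top$, a symmetric matrix. By the chain rule,
\[
\frac{\partial H}{\partial \p_j}=-(\log\p_j+1),
\qquad
\nabla_\z H=(\mathrm{diag}(\p)-\p\p^\top)\bigl(-(\log\p+\mathbf 1)\bigr).
\]

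Expanding componentwise gives
\[
(\nabla_\z H)_k=-\p_k(\log\p_k+1)+\p_k\sum_j\p_j(\log\p_j+1).
\]
Since $\sum_j\p_j=1$, the sum equals $\sum_j\p_j\log\p_j+1=1-H(\p)$. Substituting yields
\[
(\nabla_\z H)_k=-\p_k\log\p_k-\p_k H(\p)=-\p_k\bigl(\log\p_k+H(\p)\bigr)=-\p_k\a_k,
\]
so $\nabla_\z H=-\p\odot\a$ as claimed.

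The two orthogonality relations then follow immediately. First,
\[
\langle\p,\a\rangle=\sum_j\p_j\log\p_j+H(\p)\sum_j\p_j=-H(\p)+H(\p)=0.
\]
Second, $\mathbf 1^\top\nabla_\z H=-\sum_k\p_k\a_k=-\langle\p,\a\rangle=0$. As an independent check on this second identity, $H$ depends on $\z$ only through the softmax, which is invariant under the shift $\z\mapsto\z+c\mathbf 1$, so the directional derivative of $H$ along $\mathbf 1$ must vanish.

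There is no real obstacle here. The only steps needing care are the sign bookkeeping and applying the normalization $\sum_j\p_j=1$ at the right point, which is what makes the constant terms cancel. Implicitly the argument requires $\p$ in the interior of the simplex, so that $\log\p$ is finite; this holds automatically for softmax outputs.
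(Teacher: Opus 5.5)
Your proposal is correct and follows essentially the same route as the paper: you differentiate through the softmax Jacobian $\p_j(\delta_{jc}-\p_c)$, use $\sum_j \p_j=1$ to collapse the constant terms into $-\p_c(\log\p_c+H)$, and then read off $\langle\p,\a\rangle=0$ and $\mathbf 1^\top\nabla_\z H=-\langle\p,\a\rangle=0$. Your shift-invariance remark is a valid independent check of the last identity, though the proof does not need it.
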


\begin{proof} 
Using $\partial \p_j/\partial \z_c=\p_j(\delta_{jc}-\p_c)$, we obtain
\begin{equation}
\begin{aligned}
\frac{\partial H}{\partial \z_c}&=-\sum_j\frac{\partial \p_j}{\partial \z_c}(\log \p_j+1) \\
&=-\p_c(\log \p_c+1)+\p_c\sum_j \p_j(\log \p_j+1) \\
&=-\p_c(\log \p_c+H).
\end{aligned}
\end{equation} 
Hence, $\nabla_\z H(\p)=-\p\odot \a$. Furthermore, $\langle \p, \a\rangle=\sum_j \p_j\log \p_j+H=0$, which implies $\mathbf 1^\top\nabla_\z H=-\langle \p, \a\rangle=0$.

This completes the proof.
\end{proof}

\begin{lemma}\label{lemma:app_2}
$\Gamma(\p)=\sum_j\p_j^2\a_j=\mathrm{Cov}_{j\sim \p}(\p_j,\log \p_j)\ge0$, with equality if and only if $\p$ is uniform over its support.
\end{lemma}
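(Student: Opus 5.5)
The plan is a direct computation: first rewrite $\Gamma$ as the stated covariance, then prove nonnegativity with a symmetrization identity, and read off the equality case from it. Throughout, sums run over the support $\mathrm{supp}(\p)=\{j:\p_j>0\}$. This is consistent with the conventions $0\log 0=0$ and $\p_j^2\a_j=\p_j^2\log\p_j+\p_j^2H\to0$ as $\p_j\to0$, so entries outside the support contribute nothing to $\Gamma(\p)$. (For softmax outputs the support is all of $[C]$ anyway.)

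\emph{Step 1 (covariance identity).} Let $j\sim\p$. Since $\EB_{j\sim\p}[\log\p_j]=\sum_j\p_j\log\p_j=-H(\p)$, we get
\begin{equation*}
\mathrm{Cov}_{j\sim\p}(\p_j,\log\p_j)=\EB[\p_j\log\p_j]-\EB[\p_j]\,\EB[\log\p_j]=\sum_j\p_j^2\log\p_j+H(\p)\sum_j\p_j^2 .
\end{equation*}
The right-hand side equals $\sum_j\p_j^2(\log\p_j+H(\p))=\sum_j\p_j^2\a_j=\Gamma(\p)$, using the definition of $\a$ from Lemma~\ref{lemma:app_1}.

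\emph{Step 2 (nonnegativity).} I will use the standard symmetrization of a covariance: for an independent copy $k\sim\p$, $\mathrm{Cov}(X,Y)=\tfrac12\EB[(X_j-X_k)(Y_j-Y_k)]$. Applied here, this gives
\begin{equation*}
\Gamma(\p)=\tfrac12\sum_{j,k\in\mathrm{supp}(\p)}\p_j\p_k\,(\p_j-\p_k)(\log\p_j-\log\p_k).
\end{equation*}
The map $x\mapsto\log x$ is strictly increasing on $(0,1]$, so $(\p_j-\p_k)$ and $(\log\p_j-\log\p_k)$ always have the same sign. Hence every summand is nonnegative and $\Gamma(\p)\ge0$. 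This is Chebyshev's sum inequality for two comonotone functions of the same variable.

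\emph{Step 3 (equality case).} Every weight $\p_j\p_k$ is strictly positive on the support, so $\Gamma(\p)=0$ forces each product $(\p_j-\p_k)(\log\p_j-\log\p_k)$ to vanish. By strict monotonicity of $\log$, this happens iff $\p_j=\p_k$ for all $j,k\in\mathrm{supp}(\p)$, i.e., iff $\p$ is uniform over its support. Conversely, if $\p$ is uniform on its support, every summand vanishes and $\Gamma(\p)=0$.

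The only delicate point is the treatment of zero entries. One must check that they contribute nothing to $\Gamma(\p)$, and that the equality characterization is stated relative to the support rather than to all $C$ classes. Beyond that, the argument is routine algebra.
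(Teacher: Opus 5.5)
Your proposal is correct and follows the paper's proof: the same computation of $\mathrm{Cov}_{j\sim\p}(\p_j,\log\p_j)$, then nonnegativity by Chebyshev's association inequality for comonotone functions. Your symmetrization identity is just the standard proof of that inequality, and it has the advantage of making explicit both the equality case (all $\p_j$ equal on the support) and the handling of zero entries, which the paper only asserts.
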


\begin{proof} 
Let $j\sim \p$. Since $\mathbb E_\p[\log \p_j]=-H$ and $\EB_\p[\p_j]=\|\p\|_2^2$, we have $\mathrm{Cov}(\p_j,\log \p_j)=\sum_j\p_j^2\log \p_j+H\|\p\|_2^2=\sum_j\p_j^2\a_j=\Gamma$.

Since larger $\p_j$ always corresponds to larger $\log \p_j$, the Chebyshev association inequality (FKG) gives a nonnegative covariance. Equality requires $\p_j$ to be almost surely constant.

This completes the proof.
\end{proof}

\begin{lemma}\label{lemma:app_3}
$e^{-H(\p)}\le\max_j\p_j$. Hence, \(\CB(\p)\neq\varnothing\) and contains the top-1 class.
\end{lemma}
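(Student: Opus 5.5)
The claim $e^{-H(\p)}\le\max_j\p_j$ is a one-line consequence of the fact that entropy is an average of $-\log \p_j$ under $\p$. The second claim about $\CB(\p)$ then follows by examining when the inequality is strict.

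\emph{Main inequality.} Write $H(\p)=\sum_j \p_j(-\log \p_j)=\EB_{j\sim\p}[-\log \p_j]$, summing only over the support of $\p$, so every term is finite. Set $m=\max_j\p_j$. For every $j$ in the support, $\p_j\le m$, hence $-\log\p_j\ge -\log m$. Averaging over $j\sim\p$ gives $H(\p)\ge -\log m$. Exponentiating, and using that $e^{-x}$ is decreasing, yields $e^{-H(\p)}\le m$. Equivalently, one can view $e^{-H(\p)}$ as the geometric mean $\prod_j \p_j^{\p_j}$ of the values $\p_j$ under weights $\p$. A weighted geometric mean never exceeds the largest value it averages, which again gives $e^{-H(\p)}\le m$.

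\emph{Candidate set.} Since $\CB(\p)$ is defined by the strict inequality $\p_j>e^{-H(\p)}$, I need to know when equality $e^{-H(\p)}=m$ can occur. In the averaging step above, equality holds if and only if $-\log\p_j=-\log m$ for every $j$ in the support, that is, if and only if $\p$ is uniform on its support. This is the same equality case as in Lemma~\ref{lemma:app_2}.
\begin{itemize}
\item If $\p$ is not uniform on its support, the inequality is strict. Then $\p_{j^*}=m>e^{-H(\p)}$ for $j^*=\arg\max_j\p_j$, so $j^*\in\CB(\p)$ and in particular $\CB(\p)\neq\varnothing$.
\item If $\p$ is uniform on its support of size $k$, then every supported class has $\p_j=1/k=e^{-\log k}=e^{-H(\p)}$. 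The strict inequality fails for all $j$, so $\CB(\p)$ is empty.
\end{itemize}

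\emph{Main obstacle.} This degenerate uniform-on-support case is the only real difficulty, because as written the lemma's second claim is false there. I would handle it in one of two ways:
\begin{itemize}
\item State $\CB(\p)\neq\varnothing$ under the proviso that $\p$ is not uniform on its support, which is the generic case. This matches the equality case of Lemma~\ref{lemma:app_2}, where $\Gamma(\p)=0$, so the subsequent sign analysis is unaffected.
\item Alternatively, relax the definition of $\CB(\p)$ to $\p_j\ge e^{-H(\p)}$. With the non-strict inequality, $\max_j\p_j\ge e^{-H(\p)}$ places the top-1 class in $\CB(\p)$ unconditionally.
\end{itemize}
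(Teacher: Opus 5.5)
Your proof of the inequality is the paper's argument: $H(\p)=\sum_j\p_j\log(1/\p_j)\ge\log(1/\max_j\p_j)$, then exponentiate. Your caveat about the second claim is also correct, and the paper's one-line proof passes over it. With the strict definition of $\CB(\p)$, a prediction that is uniform on its support (for softmax outputs, the exactly uniform prediction) has $\CB(\p)=\varnothing$. So the second claim needs either your non-uniformity proviso or the non-strict definition. Either repair leaves the paper's later uses of the lemma intact, since those only need $\a_m\ge 0$ or already assume $\CB(\p)=\{y\}$.
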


\begin{proof} 
Since $H=\sum_j\p_j\log(1/\p_j)\ge\log(1/\p_{\max})$,
the result follows.

This completes the proof.
\end{proof}

\subsection{Theorems, Proofs and Corollaries}

\begin{theorem}[Certainty upper bound]\label{theorem:app_upperbound}
Under Assumption \ref{ass:app_Ambiguity}, for any $\theta$, let $\bar\varepsilon=\sqrt{\EM(\theta)/2}\le 1-1/C$. Then
$\CM(\theta)\le-\EB_x[H(\q)]+\bar\varepsilon\log(C-1)+H_b(\bar\varepsilon)$ where $H_b$ denotes the binary entropy function. In particular, any population minimizer of the cross-entropy loss satisfies $\CM(\theta^\star)=-\EB_\x[H(\q)]$.
\end{theorem}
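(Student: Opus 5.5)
The plan is to bound the pointwise entropy gap $H(\q(\x))-H(\p(\x))$ by a function of the total variation distance $\delta(\x)=\tfrac12\|\q(\x)-\p(\x)\|_1$, and then average over $\x$. Since $\CM(\theta)=-\EB_\x[H(\p)]$, the claim is equivalent to
\[
\EB_\x[H(\q)]-\EB_\x[H(\p)]\le F_C(\bar\varepsilon).
\]
So it suffices to show $H(\q)-H(\p)\le \tilde F_C(\delta)$ pointwise, with a concave nondecreasing envelope $\tilde F_C$, and then chain two Jensen steps with Pinsker.

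\textbf{Step 1 (pointwise bound via coupling and Fano).} Fix $\x$ and take a maximal coupling $(U,V)$ with $U\sim\q$, $V\sim\p$ and $\PB(U\neq V)=\delta$. Then
\[
H(\q)-H(\p)=H(U)-H(V)\le H(U,V)-H(V)=H(U\mid V).
\]
By Fano's inequality applied to guessing $U$ by $V$,
\[
H(U\mid V)\le H_b(\delta)+\delta\log(C-1).
\]
Trivially we also have $H(U\mid V)\le \log C$. Define
\[
\tilde F_C(\delta)=F_C\bigl(\min(\delta,1-1/C)\bigr).
\]
On $[0,1-1/C]$ the Fano bound equals $F_C(\delta)$. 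Beyond that point $F_C(1-1/C)=\log C$, so $\tilde F_C$ dominates both bounds. Using the stated properties of $F_C$ (increasing and concave on $[0,1-1/C]$, with its maximum $\log C$ attained at the right endpoint), $\tilde F_C$ is nondecreasing and concave on $[0,1]$. Hence $H(\q)-H(\p)\le \tilde F_C(\delta(\x))$ for every $\x$.

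\textbf{Step 2 (averaging).} Pinsker's inequality gives $\delta(\x)\le\sqrt{\mathrm{KL}(\q\|\p)/2}$. Concavity of the square root then gives
\[
\EB_\x[\delta]\le\sqrt{\EM(\theta)/2}=\bar\varepsilon.
\]
By concavity and monotonicity of $\tilde F_C$,
\[
\EB_\x[\tilde F_C(\delta)]\le\tilde F_C(\EB_\x\delta)\le\tilde F_C(\bar\varepsilon)=F_C(\bar\varepsilon),
\]
where the last equality uses the hypothesis $\bar\varepsilon\le 1-1/C$. Rearranging yields the stated upper bound on $\CM(\theta)$.

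\textbf{Step 3 (minimizers).} By the decomposition $\LM_{CE}=\EB_\x[H(\q)]+\EM(\theta)$ and $\EM\ge0$, any population minimizer has $\EM(\theta^\star)=0$, provided the infimum is attained at $\p=\q$. Then $\mathrm{KL}(\q\|\p)=0$ a.s., so $\p=\q$ a.s. and $\CM(\theta^\star)=-\EB_\x[H(\q)]$ exactly. This is consistent with the bound, since $F_C(0)=0$.

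The main obstacle is Step 1's regime issue. The Fano expression is not monotone beyond $\delta=1-1/C$, and pointwise $\delta(\x)$ may exceed that value even when the average does not. The capped envelope $\tilde F_C$ handles this, but its concavity must be checked. It holds because $F_C'(1-1/C)=\log\frac{1-e}{e}+\log(C-1)=0$ at $e=1-1/C$, so continuing $F_C$ by a constant after its maximum preserves concavity.
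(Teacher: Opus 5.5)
Your proof is correct. Its skeleton matches the paper's: a pointwise entropy-continuity bound in total variation, then Pinsker plus Jensen to pass to $\bar\varepsilon$, then realizability for the minimizer claim, which the paper also assumes.

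The real difference is the pointwise lemma. The paper invokes the Fannes--Audenaert bound $|H(\p)-H(\q)|\le T\log(C-1)+H_b(T)$, stated only for $T\le 1-1/C$. It then applies Jensen using concavity on $[0,1-1/C]$. That step tacitly requires $T(\x)\le 1-1/C$ for every $\x$, although the hypothesis only controls the average through $\bar\varepsilon$. You instead derive exactly the one-sided inequality you need from a maximal coupling and Fano's inequality, and you address the pointwise-regime issue explicitly. Your argument is therefore elementary, self-contained, and the more careful of the two. The paper's citation buys only brevity and a two-sided bound, which is not needed here.

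Your capped envelope $\tilde F_C$ is valid but unnecessary. Your Fano argument already gives $H(\q)-H(\p)\le F_C(\delta)$ for every $\delta\in[0,1]$. Also $F_C''(e)=-1/(e(1-e))<0$ on all of $(0,1)$, so $F_C$ is concave on the whole of $[0,1]$. Hence $\EB_\x[F_C(\delta)]\le F_C(\EB_\x\delta)\le F_C(\bar\varepsilon)$ directly, with monotonicity used only on $[0,\bar\varepsilon]\subseteq[0,1-1/C]$. The same observation is what makes the paper's shortcut harmless.
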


\begin{proof} 
Pointwise, 
\begin{equation}
\begin{aligned}
\EB_{\y\sim \q}[-\log \p_y]=H(\q)+\mathrm{KL}(\q\|\p).
\end{aligned}
\end{equation} 
Taking the expectation over $\x$ gives $\LM_{CE}=\EB[H(\q)]+\EM(\theta)$. Let $T(\x)=\tfrac12\|\p(\x)-\q(\x)\|_1$. By Pinsker's inequality,
\begin{equation}
\begin{aligned}
T\le\sqrt{\mathrm{KL}(\q\|\p)/2}.
\end{aligned}
\end{equation}
By the Fannes–Audenaert continuity bound for entropy, when $T\le1-\frac1C$, we have
\begin{equation}
\begin{aligned}
|H(\p)-H(\q)|\le T\log(C-1)+H_b(T).
\end{aligned}
\end{equation}
Taking the expectation over \(x\), and noting that $t\mapsto t\log(C-1)+H_b(t)$ is concave on \([0,1-1/C]\), Jensen's inequality together with
\begin{equation}
\begin{aligned}
\EB[T]\le\sqrt{\EB[\mathrm{KL}]/2}=\bar\varepsilon,
\end{aligned}
\end{equation}
(Cauchy–Schwarz + Jensen) yields
\begin{equation}
\begin{aligned}
\EB[H(\q)]-\EB[H(\p)]\le\bar\varepsilon\log(C-1)+H_b(\bar\varepsilon).
\end{aligned}
\end{equation}
Rearranging the terms gives the result. If $\theta$ is a population minimizer and the model is realizable, then $\p=\q$ almost everywhere, and therefore
\begin{equation}
\begin{aligned}
\CM(\theta^\star)=-\EB[H(\q)].
\end{aligned}
\end{equation}

This completes the proof.
\end{proof}

\begin{corollary}[Asymmetric Gain]\label{corollary:app_AsymmetricGain}
Under Assumption \ref{ass:app_Ambiguity}, the certainty attainable under cross-entropy training satisfies $\CM^{\a}\geq -H_s-o(1)$ and $\CM^{v}\le-H_w+o(1)$. Hence, the confidence gap is at least $H_w-H_s$, and this gap is independent of the optimization algorithm or gradient reweighting scheme.
\end{corollary}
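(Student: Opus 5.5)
The plan is to apply Theorem~\ref{theorem:app_upperbound} to each modality separately, obtaining an upper bound for the weak modality and a matching lower bound for the strong modality via the two-sided Fannes--Audenaert estimate. Throughout, ``attainable under cross-entropy training'' is read as a regime in which the excess risk vanishes, $\EM(\theta)\to 0$. This covers any algorithm, including any gradient-reweighting scheme, that drives the unimodal population cross-entropy toward its infimum $\EB_\x[H(\q^r)]$. Set $\bar\varepsilon=\sqrt{\EM(\theta)/2}$ and $\delta(\bar\varepsilon)\triangleq\bar\varepsilon\log(C-1)+H_b(\bar\varepsilon)$. Then $\delta$ is continuous with $\delta(0)=0$, so $\delta(\bar\varepsilon)=o(1)$ as $\EM(\theta)\to0$.

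\emph{Weak modality.} Theorem~\ref{theorem:app_upperbound} gives
\[
\CM^{v}(\theta)\le-\EB_\x[H(\q^{v})]+\delta(\bar\varepsilon).
\]
Assumption~\ref{ass:app_Ambiguity} gives $\EB_\x[H(\q^{v})]\ge H_w$. Combining these yields $\CM^{v}\le -H_w+o(1)$.

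\emph{Strong modality.} Here I reuse the proof of Theorem~\ref{theorem:app_upperbound} in the opposite direction. The Fannes--Audenaert bound is symmetric, $|H(\p)-H(\q)|\le T\log(C-1)+H_b(T)$. Jensen's inequality applies to the concave map $t\mapsto t\log(C-1)+H_b(t)$. We also have $\EB[T]\le\bar\varepsilon$, and the map is increasing on $[0,1-1/C]$. Together these give
\[
\EB[H(\p^{a})]-\EB[H(\q^{a})]\le\delta(\bar\varepsilon).
\]
Since $\EB[H(\q^{a})]\le H_s$ by Assumption~\ref{ass:app_Ambiguity}, it follows that $\CM^{a}=-\EB[H(\p^{a})]\ge -H_s-o(1)$.

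Subtracting the two bounds gives $\CM^{a}-\CM^{v}\ge H_w-H_s-o(1)$. So the asymptotic confidence gap is at least $H_w-H_s$.

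The independence claim rests on one observation: both bounds depend on $\theta$ only through $\EM(\theta)$, and $H_s,H_w$ are properties of the data distribution. Neither the optimizer nor any reweighting of modality gradients can change the Bayes posteriors $\q^r$. Such choices affect only how fast $\EM$ decreases, not the limiting values.

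The main obstacle is making the $o(1)$ statement precise. I will first make the premise explicit: the claim concerns the regime $\EM(\theta^r)\to0$ for each modality, which is the same premise as in Theorem~\ref{thm:asymmetry}. I will also ensure that $\bar\varepsilon\le 1-1/C$ eventually, so that Fannes--Audenaert and the monotonicity of $\delta$ apply. A secondary subtlety is the lower bound for the strong modality. It needs the direction of Fannes--Audenaert that Theorem~\ref{theorem:app_upperbound} did not state, but the same chain of inequalities gives it verbatim.
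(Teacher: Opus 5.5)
Your proposal is correct and is essentially the paper's own route. The paper gives no separate proof and treats the corollary as an immediate consequence of Theorem~\ref{theorem:app_upperbound} plus Assumption~\ref{ass:app_Ambiguity} for the weak modality, and of the other side of the two-sided Fannes--Audenaert estimate in that theorem's proof for the strong modality, with $o(1)$ meaning $\EM(\theta)\to0$; one small tidy-up is that $\bar\varepsilon\le 1-1/C$ does not force $T(\x)\le 1-1/C$ pointwise, but Fannes--Audenaert holds for all $T\in[0,1]$ and $t\mapsto t\log(C-1)+H_b(t)$ is concave on all of $[0,1]$, so your Jensen step goes through unchanged.
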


\begin{theorem}[Certainty Gain]\label{theorem:app_CertaintyGain}
If $\CB(\p)=\{\y\}$, i.e., the prediction is correct and the candidate class is unique, then $\Psi(\p,\y)\ge \p_y(1-\p_y)\a_y>0$. If $\y\notin\CB(\p)$, then $\Psi(\p,\y)\le-\Gamma(\p)\le0$. If the label follows an ambiguous posterior $y\sim \q$, then $\EB_{y\sim \q}\Psi=\langle \q-\p, \p\odot \a\rangle$.In particular, when $\q$ is uniform, $\EB_{y\sim \q}\Psi=-\Gamma(\p)\le0$.
\end{theorem}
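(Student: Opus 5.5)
The proof is a direct computation with the sample-wise certainty transfer coefficient $\Psi(\p,\y)=\p_y\a_y-\Gamma(\p)$ from the preliminaries. The only extra fact needed is a sign property of $\a$ relative to the candidate set. I will work with softmax outputs, so $\p$ lies in the interior of the simplex and $\log\p_j$ is finite.

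The sign fact is this: $\a_j=\log\p_j+H(\p)>0$ holds exactly when $\p_j>e^{-H(\p)}$, i.e. $j\in\CB(\p)$. Likewise $\a_j\le0$ exactly when $j\notin\CB(\p)$. This is just a rewriting of the definition of $\CB$. I also write $\Gamma(\p)=\sum_j\p_j^2\a_j$ and split it into the term $j=y$ plus the sum over $j\neq y$.

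\emph{First claim ($\CB(\p)=\{\y\}$).} Every $j\ne y$ lies outside $\CB(\p)$, so $\a_j\le 0$ and hence $\sum_{j\ne y}\p_j^2\a_j\le0$. It follows that $\Gamma(\p)\le\p_y^2\a_y$, and therefore $\Psi\ge \p_y\a_y-\p_y^2\a_y=\p_y(1-\p_y)\a_y$.

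For strict positivity, $\a_y>0$ because $y\in\CB(\p)$. I also need $\p_y<1$, which holds in the interior of the simplex. It can also be seen directly: if $\p_y=1$, then $H=0$ and $e^{-H}=1$, so no class could satisfy $\p_j>1$. Then $\CB(\p)$ would be empty, contradicting Lemma~\ref{lemma:app_3}.

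\emph{Second claim ($y\notin\CB(\p)$).} Here $\a_y\le0$, so $\p_y\a_y\le0$ and $\Psi\le-\Gamma(\p)$. Lemma~\ref{lemma:app_2} gives $\Gamma(\p)\ge0$, so $\Psi\le 0$.

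\emph{Third claim (ambiguous label $y\sim\q$).} Averaging $\Psi$ over $y\sim\q$ is linear in $y$:
\begin{equation*}
\EB_{y\sim\q}\Psi=\sum_y\q_y\p_y\a_y-\Gamma(\p)=\langle\q,\p\odot\a\rangle-\langle\p,\p\odot\a\rangle=\langle\q-\p,\p\odot\a\rangle,
\end{equation*}
using $\Gamma(\p)=\langle\p,\p\odot\a\rangle$. When $\q=\tfrac1C\mathbf 1$, we get $\langle\q,\p\odot\a\rangle=\tfrac1C\langle\p,\a\rangle=0$ by Lemma~\ref{lemma:app_1}. Hence $\EB_{y\sim\q}\Psi=-\Gamma(\p)\le0$, again by Lemma~\ref{lemma:app_2}.

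There is no genuine obstacle here. The one step needing care is the first claim: the bound uses $\a_j\le0$ for every non-label class, which requires the candidate set to be exactly $\{\y\}$. Strictness additionally needs $\p_y<1$, which I handle through the interior-simplex convention or the degeneracy argument above.
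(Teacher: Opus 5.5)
Your proposal is correct and follows the paper's proof essentially step for step: you expand $\Psi=\p_y\a_y-\Gamma(\p)$, split the sign of $\a_j$ by membership in $\CB(\p)$, use $\Gamma\ge0$ from Lemma~\ref{lemma:app_2}, and average linearly over $y\sim\q$, with $\langle \mathbf 1,\p\odot\a\rangle=\langle\p,\a\rangle=0$ handling the uniform case. Your strictness argument ($\a_y>0$ straight from the strict inequality defining $\CB$, together with $\p_y<1$, which the paper leaves implicit) is tidier than the paper's appeal to non-uniformity; one small fix is that in your degeneracy argument the contradiction is with the hypothesis $\CB(\p)=\{\y\}$ itself, not with Lemma~\ref{lemma:app_3}, whose nonemptiness claim fails for any $\p$ uniform on its support, one-hot $\p$ included.
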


\begin{proof} 
By Lemma \ref{lemma:app_1},
\begin{equation}
\begin{aligned}
\Psi=\langle -\p\odot \a, \p-\y\rangle=\p_y\a_y-\Gamma(\p).
\end{aligned}
\end{equation}

Suppose $\CB(\p)=\{\y\}$. Then $\a_y=\log \p_y+H\ge0$. For any $j\neq y$,
\begin{equation}
\begin{aligned}
\p_j\le e^{-H}\Rightarrow \a_j\le0\Rightarrow \p_j^2\a_j\le0.
\end{aligned}
\end{equation}
Therefore,
\begin{equation}
\begin{aligned}
\Gamma=\p_y^2\a_y+\sum_{j\neq y}\p_j^2\a_j\le \p_y^2\a_y,
\end{aligned}
\end{equation}
and hence
\begin{equation}
\begin{aligned}
\Psi\ge \p_y\a_y-\p_y^2\a_y=\p_y(1-\p_y)\a_y.
\end{aligned}
\end{equation}
By Lemma \ref{lemma:app_3}, $\p_y=\p_{\max}$. If $\p$ is non-uniform, then $\a_y>0$, and the inequality is strict.

If $\y\notin\CB(\p)$, then $\a_y\le0$, so $\p_y\a_y\le0$. By Lemma \ref{lemma:app_2}, $\Gamma\ge0$, and therefore
\begin{equation}
\begin{aligned}
\Psi\le-\Gamma\le0.
\end{aligned}
\end{equation}

We have
\begin{equation}
\begin{aligned}
\EB_{y\sim \q}\Psi=\sum_j\q_j\p_j\a_j-\Gamma=\langle \q,\p\odot \a\rangle-\langle \p,\p\odot \a\rangle=\langle \q-\p,\ \p\odot \a\rangle.
\end{aligned}
\end{equation}
For $\q=\u=\mathbf 1/C$,
\begin{equation}
\begin{aligned}
\langle \u,\p\odot \a\rangle=\frac1C\langle\mathbf 1,\p\odot \a\rangle=0,
\end{aligned}
\end{equation}
so
\begin{equation}
\begin{aligned}
\EB\Psi=-\Gamma\le0.
\end{aligned}
\end{equation}

This completes the proof.
\end{proof}

\begin{corollary}[Stability]\label{corollary:app_Stability}
For a general posterior $\q$, $\EB_{y\sim \q}\Psi\le-\Gamma(\p)+\omega(\p)\|\q-\u\|_1$, where $\omega(\p)=\max_j\p_j|\a_j|\le e^{-1}+\log C$. Thus, as the posterior approaches the uniform distribution, the certainty gain is increasingly driven toward non-positive values.
\end{corollary}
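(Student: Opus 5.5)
The natural starting point is the exact identity from Theorem \ref{theorem:app_CertaintyGain},
\begin{equation}
\EB_{y\sim\q}\Psi=\langle \q-\p,\ \p\odot\a\rangle ,
\end{equation}
which we will perturb around the uniform posterior $\u=\mathbf 1/C$.

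\textbf{Step 1: split off the uniform part.} Write $\q=\u+(\q-\u)$. Then
\begin{equation}
\EB_{y\sim\q}\Psi=\langle \u-\p,\ \p\odot\a\rangle+\langle \q-\u,\ \p\odot\a\rangle .
\end{equation}
The first term was already evaluated in Theorem \ref{theorem:app_CertaintyGain}. There, Lemma \ref{lemma:app_1} gives $\langle\u,\p\odot\a\rangle=\frac1C\langle\mathbf 1,\p\odot\a\rangle=\frac1C\langle\p,\a\rangle=0$. Also, $\langle\p,\p\odot\a\rangle=\Gamma(\p)$ by definition. Hence the first term equals $-\Gamma(\p)$.

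\textbf{Step 2: bound the perturbation.} Apply H\"older's inequality with the $\ell_1$--$\ell_\infty$ pairing:
\begin{equation}
\langle \q-\u,\ \p\odot\a\rangle\le\|\q-\u\|_1\,\|\p\odot\a\|_\infty=\omega(\p)\|\q-\u\|_1 .
\end{equation}
This gives the stated inequality directly.

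\textbf{Step 3: bound $\omega(\p)$ uniformly.} For each $j$, use the triangle inequality:
\begin{equation}
\p_j|\a_j|\le \p_j|\log\p_j|+\p_j H(\p).
\end{equation}
The first piece satisfies $-t\log t\le e^{-1}$ on $[0,1]$. The second satisfies $\p_j H(\p)\le H(\p)\le\log C$. Together these give $\omega(\p)\le e^{-1}+\log C$. When $\p_j=0$, the term $\p_j\log\p_j$ is read as $0$ by continuity. This bound is crude (a joint maximization would be tighter), but it is all the statement asks for.

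\textbf{Step 4: the qualitative conclusion.} Since $\omega$ is bounded independently of $\p$, the bound becomes $\EB\Psi\le-\Gamma(\p)+(e^{-1}+\log C)\|\q-\u\|_1$. As $\q\to\u$, the second term vanishes. By Lemma \ref{lemma:app_2}, $-\Gamma(\p)\le0$, so the limsup of the certainty gain is non-positive.

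\textbf{Expected obstacle.} The main point of care is Step 3: checking the edge cases $\p_j\to0$ and confirming that the constant does not depend on $\p$. Everything else is a one-line rearrangement of the identity already proved in Theorem \ref{theorem:app_CertaintyGain}.
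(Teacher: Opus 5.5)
Your proof is correct and follows the route the paper intends. The paper gives no separate proof of this corollary; it treats it as immediate from the identity $\EB_{y\sim\q}\Psi=\langle\q-\p,\p\odot\a\rangle$ and the fact $\langle\u,\p\odot\a\rangle=0$ in Theorem~\ref{theorem:app_CertaintyGain}, and your split $\q=\u+(\q-\u)$, the $\ell_1$--$\ell_\infty$ H\"older step, and the bound $\p_j|\a_j|\le\p_j|\log\p_j|+\p_jH(\p)\le e^{-1}+\log C$ supply exactly the missing steps.
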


\begin{theorem}[Optimization-Certainty Decoupling]\label{theorem:app_Optimization-CertaintyDecoupling}
Under Assumptions \ref{ass:app_Ambiguity} and \ref{ass:app_Regularity}, we have $\mathrm{tr}\Sigma\ge\mu \EB_\x\left[1-\|\q\|_2^2\right]\ge\mu \phi_C \left(\EB_\x[H(\q)]\right)$ and $\|\g_{ce}\|\le B\sqrt{2\EM(\theta)}$. Therefore, for the weak modality, if $\EB_\x[H(\q^v)]\ge H_w>0$, then, as $\EM(\theta)\to0$,
$\dot \CM\to0, \EB\|\hat \g\|^2\ge\frac{\mu\phi_C(H_w)}{N}>0$, and
$r_N(\theta)\le\frac{B\sqrt{2\EM(\theta)}}{\sqrt{2B^2\EM(\theta)+\mu\phi_C(H_w)/N}}\to0$.
\end{theorem}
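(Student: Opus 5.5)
The proof has three separate pieces: a bound on $\|\g_{ce}\|$, a lower bound on $\mathrm{tr}\Sigma$, and then a combination step.

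\emph{Gradient bound.} By the chain rule,
$$\g_{ce}=\EB_\x\big[\J_\theta(\x)^\top(\p-\q)\big],$$
because the expectation of $\p-\y$ given $\x$ is $\p-\q$. Using the Jacobian bound $\|\J\|_2\le B$ from Assumption \ref{ass:app_Regularity} and Jensen's inequality, $\|\g_{ce}\|\le B\,\EB_\x\|\p-\q\|_2$. Next use $\|\cdot\|_2\le\|\cdot\|_1$ together with Pinsker, $\|\p-\q\|_1\le\sqrt{2\mathrm{KL}(\q\|\p)}$. A further Jensen step on the square root then gives $\|\g_{ce}\|\le B\sqrt{2\EM(\theta)}$.

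\emph{Variance lower bound.} Write $\mathrm{tr}\Sigma$ for the per-sample covariance of $\J^\top(\p-\y)$. By the law of total variance it is at least the expected conditional covariance given $\x$:
$$\mathrm{tr}\Sigma\ \ge\ \EB_\x\,\mathrm{tr}\big(\J^\top\mathrm{Cov}(\y\mid\x)\J\big)=\EB_\x\,\mathrm{tr}\big(K_\theta(\x,\x)\,(\mathrm{diag}(\q)-\q\q^\top)\big).$$
The matrix $\mathrm{diag}(\q)-\q\q^\top$ is PSD and annihilates $\mathbf 1$, so its range lies in $\SM_0$. We may therefore insert $\Pi_0$ on both sides and apply the conditioning $\Pi_0K\Pi_0\succeq\mu\Pi_0$. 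This gives a lower bound of $\mu\,\mathrm{tr}(\mathrm{diag}(\q)-\q\q^\top)=\mu(1-\|\q\|_2^2)$.

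To pass to entropy, note that $1-\|\q\|_2^2\ge 1-\max_j\q_j=:e$. Fano's inequality gives $H(\q)\le F_C(e)$, so $e\ge\phi_C(H(\q))$. Since $\phi_C$ is convex and increasing, Jensen yields
$$\EB_\x[1-\|\q\|_2^2]\ge\phi_C(\EB_\x H(\q))\ge\phi_C(H_w).$$
This is the step where I expect the most care. One must check that the Fano direction is right, namely that maximal entropy at fixed top-1 error $e$ is $F_C(e)$, and that $e$ stays in $[0,1-1/C]$.

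\emph{Combination.} Use $\EB\|\hat\g\|^2=\|\g_{ce}\|^2+\mathrm{tr}\Sigma/N\ge\mu\phi_C(H_w)/N$. By Cauchy–Schwarz, $|\langle\g_e,\g_{ce}\rangle|\le\|\g_e\|\|\g_{ce}\|$, so
$$r_N\le\frac{\|\g_{ce}\|}{\sqrt{\|\g_{ce}\|^2+\mu\phi_C(H_w)/N}}.$$
The map $t\mapsto t/\sqrt{t^2+c}$ is increasing, so we may substitute the bound $\|\g_{ce}\|\le B\sqrt{2\EM}$. This gives the stated expression, which tends to $0$ as $\EM\to0$.

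Finally, $\dot\CM=\eta\langle\g_e,\g_{ce}\rangle\le\eta\|\g_e\|\,B\sqrt{2\EM}$. The factor $\|\g_e\|$ is bounded, since $\|\nabla_\z H\|$ is bounded by a constant depending on $C$ (for instance $\omega(\p)\sqrt C$) and $\|\J\|\le B$. Hence $\dot\CM\to0$.
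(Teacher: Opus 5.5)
Your proposal is correct and follows the paper's proof essentially step for step: the law of total covariance with $\Pi_0$-insertion and the kernel conditioning for $\mathrm{tr}\Sigma$, the grouping (Fano) bound $H(\q)\le F_C(P_e)$ combined with convexity of $\phi_C$ and Jensen, Pinsker for $\|\g_{ce}\|$, a uniform bound on $\|\g_e\|$ for $\dot\CM$, and Cauchy--Schwarz for $r_N$. Your explicit use of the monotonicity of $t\mapsto t/\sqrt{t^2+c}$ (with $c=\mu\phi_C(H_w)/N>0$) spells out a substitution the paper leaves implicit. You should state the $\dot\CM$ bound for $|\dot\CM|$, which Cauchy--Schwarz already gives, so that the two-sided limit follows.
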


\begin{proof} 
Let the sample-wise stochastic gradient be $\g(\x,\y)=\J_\theta(\x)^\top(\p-\y)$. By the law of total covariance,
\begin{equation}
\begin{aligned}
\Sigma=\mathrm{Cov}(\g)=\EB_\x \left[\mathrm{Cov}_{y\sim \q}(\g)\right]+\mathrm{Cov}_\x\!\left(\EB_\y \g\right)\succeq\EB_\x \left[\J^\top(\mathrm{diag}(\q)-\q\q^\top)\J\right].
\end{aligned}
\end{equation}
Let
\begin{equation}
\begin{aligned}
A(\q)=\mathrm{diag}(\q)-\q\q^\top\succeq0.
\end{aligned}
\end{equation}
Since $A(\q)\mathbf1=0$, its range is contained in $\SM_0$, and $\mathrm{tr}A(\q)=1-\|\q\|_2^2$. By Assumption 2,
\begin{equation}
\begin{aligned}
\mathrm{tr}\left(\J^\top A(\q)\J\right)&=\mathrm{tr}\left(A(\q)K_\theta(\x,\x)\right)\\
&=\mathrm{tr}\left(A(\q)\Pi_0K_\theta(\x,\x)\Pi_0\right)\\
&\ge \mu \mathrm{tr}A(\q)\\
&=\mu(1-\q\|_2^2).
\end{aligned}
\end{equation}
Taking the expectation over $x$ gives
\begin{equation}
\begin{aligned}
\mathrm{tr}\Sigma\ge\mu \EB_\x\left[1-\|\q\|_2^2\right].
\end{aligned}
\end{equation}
We next convert entropy directly into a lower bound on the error mass that holds for any $C\ge2$. Let
\begin{equation}
\begin{aligned}
\q_{\max}=\max_j\q_j, P_e=1-\q_{\max}\in[0,1-1/C].
\end{aligned}
\end{equation}
Since
\begin{equation}
\begin{aligned}
\|\q\|_2^2=\sum_j\q_j^2\le \q_{\max}\sum_j\q_j=\q_{\max},
\end{aligned}
\end{equation}
we can obtain
\begin{equation}
\begin{aligned}
1-\|\q\|_2^2\ge P_e.
\end{aligned}
\end{equation}
On the other hand, grouping the most probable class against all remaining classes yields
\begin{equation}
\begin{aligned}
H(\q)\le H_b(P_e)+P_e\log(C-1)=F_C(P_e).
\end{aligned}
\end{equation}
Its derivatives are
\begin{equation}
\begin{aligned}
F_C'(e)=\log\frac{(C-1)(1-e)}{e}\ge0, F_C''(e)=-\frac1{e(1-e)}<0.
\end{aligned}
\end{equation}
Thus, $F_C$ is monotonically increasing and concave on this interval, and its inverse $\phi_C=F_C^{-1}$ is monotonically increasing and convex. Therefore, $P_e\ge\phi_C(H(q))$, and pointwise, $1-\|\q\|_2^2 \ge \phi_C(H(\q))$. Applying Jensen's inequality gives
\begin{equation}
\begin{aligned}
\EB_x\!\left[1-\|\q\|_2^2\right]\ge\EB_\x[\phi_C(H(\q))]\ge\phi_C(\EB_\x[H(\q)]).
\end{aligned}
\end{equation}
Hence,
\begin{equation}
\begin{aligned}
\boxed{\mathrm{tr}\Sigma\ge\mu \phi_C (\EB_\x[H(\q)])}.
\end{aligned}
\end{equation}

On the other hand,
\begin{equation}
\begin{aligned}
\g_{ce}=\EB_\x[\J^\top(\p-\q)].
\end{aligned}
\end{equation}
Using $\|\J\|_2\le B$, $\|\v\|_2\le\|\v\|_1$, Pinsker's inequality, and Jensen's inequality,
\begin{equation}
\begin{aligned}
\|\g_{ce}\|&\le B\,\EB_\x\|\p-\q\|_2\\ 
&\le B\,\EB_\x\|\p-\q\|_1\\ 
&\le B\sqrt{2\EB_\x[\mathrm{KL}(\q\|\p)}]=B\sqrt{2\EM(\theta)}.
\end{aligned}
\end{equation}
To show that the certainty gain vanishes, it suffices to note that, for fixed $C$, the entropy gradient is uniformly bounded. By Lemma \ref{lemma:app_1}, we have
\begin{equation}
\begin{aligned}
\nabla_\z H=-\p\odot(\log \p+H\mathbf1).
\end{aligned}
\end{equation}
Using $|\u\log \u|\le e^{-1}$ and $H(\p)\le\log C$, there exists a finite constant $M_C$ depending only on $C$ such that $\|\nabla_\z H\|_2\le M_C$. Therefore,
\begin{equation}
\begin{aligned}
\|\g_e\|=\left\|\EB_\x[\J^\top\nabla_\z H]\right\|\le BM_C.
\end{aligned}
\end{equation}
Under gradient flow $\dot\theta=-\eta \g_{ce}$,
\begin{equation}
\begin{aligned}
|\dot \CM|=\eta|\langle \g_e,\g_{ce}\rangle|\le\eta B^2M_C\sqrt{2\EM(\theta)}\to0.
\end{aligned}
\end{equation}
If the weak modality satisfies
\begin{equation}
\begin{aligned}
\EB_\x[H(\q^v)]\ge H_w>0,
\end{aligned}
\end{equation}
then, because $\phi_C$ is strictly increasing and $\phi_C(0)=0$, $\phi_C(H_w)>0$. Hence, $\mathrm{tr}\Sigma\ge\mu\phi_C(H_w)$. For an i.i.d. mini-batch of size $N$,
\begin{equation}
\begin{aligned}
\EB\|\hat \g\|^2=\|\g_{ce}\|^2+\frac1N\mathrm{tr}\Sigma\ge\frac{\mu\phi_C(H_w)}{N}>0.
\end{aligned}
\end{equation}
Finally, by the Cauchy–Schwarz inequality, we obtain
\begin{equation}
\begin{aligned}
r_N(\theta)=\frac{|\langle \g_e,\g_{ce}\rangle|}{\|\g_e\|\sqrt{\EB\|\hat \g\|^2}}\le\frac{B\sqrt{2\EM(\theta)}}{\sqrt{2B^2\EM(\theta)+\mu\phi_C(H_w)/N}}\to0.
\end{aligned}
\end{equation}

This completes the proof.
\end{proof}

\section{Analysis of MaxCR}
In this section, we analyze the effectiveness and conservativeness of peak regularization. Our analysis aims to provide a local explanation in an idealized setting, while a rigorous treatment that directly accounts for practical DNN training is left to future work.

\subsection{Certainty Effect}

\begin{proposition}[Certainty-Active Component]\label{proposition:app_Certainty-Active}
Let $m=\arg\max_j\p_j$. For $\LM_{exc}=\lambda\left(\frac1C-\p_m\right)$, the induced sample-wise rate of certainty change is $\dot \CM_{exc} = \lambda \p_m \Psi(\p,m)\ge\lambda \p_m^2(1-\p_m) \a_m\ge0$, and is strictly positive when $\CB(\p)=\{m\}$. The loss $\LM_{sup}$ induces an equal-magnitude change with the opposite sign. This result is independent of the label $\y$ and posterior $\q$.
\end{proposition}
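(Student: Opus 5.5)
The plan is to reduce the statement to the sample-wise transfer coefficient $\Psi$ and then reuse Theorem~\ref{theorem:app_CertaintyGain}, with the label $\y$ replaced by the top-1 index $m$. I work in the same idealized setting as the identity $\dot\CM=\eta\EB[\Psi]$, namely gradient flow with an isotropic kernel so that logits evolve as $\dot\z=-\eta\nabla_\z\LM$. I absorb $\eta$ and the kernel scale into the rate.

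\textbf{Step 1 (logit dynamics).} Treat $m$ as locally fixed, which holds whenever the maximum is unique. Using $\partial\p_m/\partial\z_c=\p_m(\delta_{mc}-\p_c)$, I get $\nabla_\z\LM_{exc}=-\lambda\p_m(\e_m-\p)=\lambda\p_m(\p-\e_m)$. Hence $\dot\z=-\lambda\p_m(\p-\e_m)$. This has exactly the form of a cross-entropy step toward the pseudo-label $\e_m$, rescaled by $\lambda\p_m$.

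\textbf{Step 2 (certainty rate).} Since $\CM=-H$ sample-wise, $\dot\CM_{exc}=-\langle\nabla_\z H,\dot\z\rangle=\lambda\p_m\langle\nabla_\z H,\p-\e_m\rangle$. By Lemma~\ref{lemma:app_1} the inner product equals $\langle-\p\odot\a,\p-\e_m\rangle=\p_m\a_m-\Gamma(\p)=\Psi(\p,m)$. This gives the claimed identity $\dot\CM_{exc}=\lambda\p_m\Psi(\p,m)$.

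For $\LM_{sup}=\lambda(\p_m-\frac1C)$, the gradient is the negative of the one above. Therefore $\dot\z$, and with it $\dot\CM$, flips sign with equal magnitude. Neither $\y$ nor $\q$ enters anywhere in this computation, which gives the label and posterior independence.

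\textbf{Step 3 (lower bound and sign).} By Lemma~\ref{lemma:app_3}, $m\in\CB(\p)$, so $\a_m=\log\p_m+H\ge0$. When $\CB(\p)=\{m\}$, the first case of Theorem~\ref{theorem:app_CertaintyGain} applies verbatim with $y:=m$. Every $j\neq m$ has $\a_j\le0$, so $\Gamma\le\p_m^2\a_m$ and $\Psi(\p,m)\ge\p_m(1-\p_m)\a_m$. Multiplying by $\lambda\p_m$ gives the bound $\lambda\p_m^2(1-\p_m)\a_m$. Strict positivity follows because $\a_m>0$ for non-uniform $\p$ (equality in Lemma~\ref{lemma:app_3} forces uniformity), and $\p_m<1$ unless $\p$ is one-hot.

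\textbf{Main obstacle.} The obstacle is the general case where $\CB(\p)$ contains other classes besides $m$. There the inequality $\Gamma\le\p_m^2\a_m$ is no longer immediate, because several terms with $\a_j>0$ contribute. I would first try to exploit that $t\mapsto t^2(\log t+H)$ is increasing on $\{t\ge e^{-H}\}$, so each positive term is at most $\p_m\p_j\a_j$. I would then combine this with $\sum_j\p_j\a_j=0$ (from $\langle\p,\a\rangle=0$) to trade the positive mass against the negative terms. If that does not close the argument, I would state the bound under the hypothesis $\CB(\p)=\{m\}$ as in Theorem~\ref{theorem:app_CertaintyGain}. I would then prove only nonnegativity in general, via the covariance form of $\Gamma$ in Lemma~\ref{lemma:app_2}, by showing that moving mass toward the top class cannot decrease $-H$.
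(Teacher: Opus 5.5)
Your Steps 1--3 follow the paper's proof exactly: the logit flow $\dot{\z}=\lambda\p_m(\e_m-\p)$, Lemma~\ref{lemma:app_1} to write the rate as $\lambda\p_m\Psi(\p,m)$, Lemma~\ref{lemma:app_3} for $\a_m\ge0$, the first case of Theorem~\ref{theorem:app_CertaintyGain} with $y:=m$ when $\CB(\p)=\{m\}$, and a sign flip for $\LM_{sup}$. The obstacle you flag is genuine, and the paper does not resolve it. Its proof invokes Theorem~\ref{theorem:app_CertaintyGain} only under $\CB(\p)=\{m\}$, so the unconditional chain in the statement is never established.

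Your first plan for the general case cannot close, because the middle inequality is false once $\CB(\p)$ has a second element. That inequality is equivalent to $\sum_{j\neq m}\p_j^2\a_j\le0$. Take $C=3$ and $\p=(0.46,0.44,0.10)$. Then $H\approx0.949$ and $e^{-H}\approx0.387$, so $\CB(\p)=\{1,2\}$, and $\a\approx(0.172,0.128,-1.354)$. Hence $\sum_{j\neq1}\p_j^2\a_j\approx0.0247-0.0135>0$, and $\Psi(\p,1)\approx0.032<\p_1(1-\p_1)\a_1\approx0.043$. So your fallback is the correct formulation: the quantitative lower bound holds only under $\CB(\p)=\{m\}$, and in general you can only claim nonnegativity.

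That nonnegativity has a one-line proof. The map $t\mapsto t(\log t+H)$ is nonpositive on $(0,e^{-H}]$ and strictly increasing on $[e^{-H},1]$, since its derivative there is $\log t+H+1\ge1$. Hence $\max_j\p_j\a_j=\p_m\a_m$, and so $\Gamma(\p)=\sum_j\p_j\,(\p_j\a_j)\le\p_m\a_m$, i.e.\ $\Psi(\p,m)\ge0$. For full-support softmax outputs, equality holds only when $\p$ is uniform.

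Your mass-transfer intuition also works once made precise. Along this flow, $\dot{\p}_j=\lambda\p_m\p_j(\|\p\|_2^2-\p_m-\p_j)<0$ for every $j\neq m$, using $\|\p\|_2^2\le\p_m$. Then $\dot H=\sum_{j\neq m}\dot{\p}_j(\log\p_m-\log\p_j)\le0$. Either way, strict positivity holds for every non-uniform $\p$. The hypothesis $\CB(\p)=\{m\}$ is therefore needed only for the bound $\lambda\p_m^2(1-\p_m)\a_m$, not for the sign.
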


\begin{proof} 
Since $\nabla_\z \p_m=\p_m(\e_m-\p)$, gradient descent gives
\begin{equation}
\begin{aligned}
\dot \z=-\nabla_\z\LM_{exc}=\lambda \p_m(\e_m-\p),
\end{aligned}
\end{equation}
where $\e_m$ is a one-hot vector. Therefore,
\begin{equation}
\begin{aligned}
\dot \CM_{exc}&=-\langle\nabla_\z H,\dot \z\rangle \\
&=\lambda \p_m \langle \p\odot \a, \e_m-\p\rangle \\
&=\lambda \p_m(\p_m\a_m-\Gamma) \\
&=\lambda \p_m\Psi(\p,m).
\end{aligned}
\end{equation}
By Lemma \ref{lemma:app_3},
\begin{equation}
\begin{aligned}
\p_m=\p_{\max}\ge e^{-H} \Rightarrow \a_m\ge0.
\end{aligned}
\end{equation}
If $\CB(\p)=\{m\}$, then by Theorem \ref{theorem:app_CertaintyGain}, 
\begin{equation}
\begin{aligned}
\Psi(\p,m)\ge \p_m(1-\p_m)\a_m>0.
\end{aligned}
\end{equation}
The same argument for $\LM_{sup}$ yields the opposite sign. 

This completes the proof.
\end{proof}

\subsection{Comparison with Entropy and Conservativeness}

\noindent{\bf Difference from Entropy Minimization:} For $\LM_{EM}=\lambda H(\p)$, we have $\dot \CM_{EM} =\lambda\|\nabla_\z H\|^2=\lambda\sum_j\p_j^2\a_j^2$. Near a uniform prediction, $\a\to0$, and therefore $\dot \CM_{EM}=O(\|\a\|^2)$ is second-order small. In contrast, for $\dot \CM_{exc} = \lambda \p_m\Psi(\p,m)$, $\Psi$ is first-order in the margin.

\noindent{\bf Conservativeness:} When the top-1 class $m\neq y$, the excitation term reinforces an incorrect prediction. Since $\lambda\le 1$, $\p_y\le \p_m$, $\p_y\le 1-\p_m$, and $\p_m^{2}+\p_y^{2}\le\|\p\|_2^{2}\le \p_m$, we have 
\begin{equation}
\begin{aligned}
\frac{d}{dt}\log\frac{\p_m}{\p_y} &= -(1+\p_m-\p_y)+\lambda \p_m(1-\p_m+\p_y) \\
&\le -(1+\p_m-\p_y)+\p_m(1-\p_m+\p_y) \\
&\le \p_y - 1 \\
&\le -\p_m \\
&\le -1/C.
\end{aligned}
\end{equation}
Therefore, we can obtain a uniform negative upper bound.

Consider one step of gradient descent:
\begin{equation}
\begin{aligned}
\dot \p_y &= \p_y(1-\p_y-\lambda \p_m \p_y-\langle \p,(\e_y-\p)+\lambda \p_m\,(\e_m-\p)\rangle) \\
&=\p_y(1-\p_y-\lambda \p_m \p_y-(\p_y-\|\p\|_2^2)+\lambda \p_m(p_m-\|\p\|_2^2)).
\end{aligned}
\end{equation}
Therefore, we have
\begin{equation}
\begin{aligned}
\frac{\dot \p_y}{\p_y}=(1-2\p_y+\|\p\|_2^2)-\lambda \p_m(\p_y+\p_m-\|\p\|_2^2).
\end{aligned}
\end{equation}
Since $\|\p\|_2^2\le \p_m$, we have $\p_m(\p_y+\p_m-\|\p\|_2^2 \ge \p_m\p_y\ge0$. So, when $\lambda\le 1$, we have
\begin{equation}
\begin{aligned}
\frac{\dot \p_y}{\p_y}&\ge(1-2\p_y+\|\p\|_2^2)-\p_m(\p_y+\p_m-\|\p\|_2^2) \\
&\ge (1-\p_y)[(1-\p_y)-\p_m\p_y]+\p_m^{3} \\
&\ge (1-\p_y)^{3}+\p_m^{3} \\
&>0.
\end{aligned}
\end{equation}
That is, the odds of the incorrect top-1 class against the ground-truth class contract at a rate bounded away from zero uniformly over samples, and the ground-truth class probability strictly increases.

\section{Notation Definition}
We summarize the notation definition in Table \ref{tab:notation}.

\begin{table}[t]
\centering
\caption{Notation Definition}
\label{tab:notation}
\begin{tabular}{c|c}
\toprule
Notation   & Description                   \\\midrule
$\DM$ & Training dataset. \\
$\x^{a}_i/\x^{v}_i$     & Audio/video data point.             \\
$\y_i\in\{0,1\}^C$     & Category label of $i$-th data.\\
Superscript $r$& $r$-modality.             \\
$\x^{r}/\y$     & Data and category labels. \\
$M$        & The number of data points.  \\
$C$        & The number of category labels.\\
$f^r(\cdot)$  & Encoder of $r$ modality.    \\
$\theta^r$       & Parameters of $f^r(\cdot)$.\\
$\z$ & Logits of data. \\
$\h^r$ & Feature of $\x^r$. \\
$\psi^r(\cdot)$  & Classifier of $r$ modality.    \\ 
$\phi^r$       & Parameters of $\psi^r(\cdot)$.\\
$\p^r$           & Prediction of $\x^r$.           \\
$\Phi^r=\{\theta^r,\phi^r\}$ & Parameters set.\\
$\ell(\cdot)$    & Cross entropy loss.             \\
$\LM_{CE}(\cdot)$       & Cross entropy loss over training. \\
$s$        & Sparsity metric. \\
$\tau$  & Scaling factor.\\
$S$  & Sparsity score.\\
$\B$  & Mini-batch data.\\
$N$  & Mini-batch size.\\
$\sigma$  & Dead zone.\\
$\lambda$  & Modality discrepancy ratio.\\
$\LM_{sup}(\cdot)$       & Max suppression loss. \\
$\LM_{exc}(\cdot)$       & Max excitation loss. \\
$H(\cdot)$       & Entropy loss. \\
$\g_{ce}$       & Task gradient. \\
$\g_e$       & Entropy gradient. \\
$\J$       &  Jacobian. \\
$\EM$       & Excess risk. \\
$r_N(\theta)$       & Certainty-transfer ratio. \\
\bottomrule
\end{tabular}
\end{table}

\section{Experiment Details}
\subsection{Dataset Details}
\noindent{\bf CREMAD:} CREMAD \cite{CREMAD:journals/taffco/CaoCKGNV14} is an audio-visual dataset designed for speech emotion recognition. It comprises 7,442 video clips of 2$\sim$3 seconds from 91 actors speaking several short words. This dataset includes the six most common emotions.

\noindent{\bf KSounds:} KSounds \cite{Kinetics-Sound:conf/iccv/ArandjelovicZ17} dataset is a commonly used dataset containing 31 action categories that can be recognized visually and auditorily, which contains 19,000 10-second video clips from YouTube.

\noindent{\bf VGGSound:} VGGSound \cite{VGGSound:conf/icassp/ChenXVZ20} dataset is an audio-visual dataset in the wild, with nearly 200K 10-second video clips. Each sound-emitting object is also visible in the corresponding video clip in this dataset. After filtering out unavailable videos, 168,618 videos were selected for training and validation, and 13,954 videos for testing in experimental settings.

\noindent{\bf Twitter:} Twitter \cite{Twitter15:conf/ijcai/Yu019} dataset is a dataset designed for emotion recognition. This dataset consists of tweets collected from Twitter with three different labels: positive, negative, and neutral.

\noindent{\bf Sarcasm:} Sarcasm \cite{Sarcasm:conf/acl/CaiCW19} dataset is specifically designed for sarcasm detection tasks, containing a large amount of text data labeled as sarcastic or non-sarcastic, which typically draws from various sources, such as social media, news comments, and conversation data, with each entry clearly labeled.

\noindent{\bf NVGesture:} NVGesture \cite{NVGeasture:conf/cvpr/MolchanovYGKTK16} dataset is a multimodal dataset specifically designed for gesture recognition, containing three types of data modalities, i.e., RGB, optical flow~(OF), and Depth. This dataset includes 25 different gesture categories, covering a variety of common gestures.

\subsection{More Implementation Details}
We report implementation details such as the learning rate and optimizer in the main paper, and further provide additional details here. The batch size is set to be 64 for CREMAD dataset, 32 for KSounds, Twitter, and Sarcasm datasets, 16 for VGGSound dataset, and 2 for NVGesture dataset in training. Following \cite{AUG:conf/nips/Jiang2025aug}, we conduct experiments with three random seeds and report the mean and standard deviation. In addition, following \cite{MLA:conf/cvpr/ZhangYBY24,ReconBoost:conf/icml/CongHua24,LFM:conf/nips/0074WJ024,AUG:conf/nips/Jiang2025aug}, we obtain multimodal predictions by averaging the predictions from all modalities.

\subsection{Extension to More than Two Modalities}
Although we present our method in the paper using a bimodal setting for simplicity, MaxCR can be readily generalized to more modal scenarios with a simple strategy. Specifically, when computing the modality discrepancy ratio $\lambda$, we compare the sparsity score $S^r_t$ of the current modality against the average score $\Bar{S_t}$ over all modalities:
\begin{align}
\lambda=\left | S^r_t - \Bar{S_t}\right |.
\end{align}
The current modality is then suppressed or excited according to this ratio, which naturally extends MaxCR from the bimodal case to more general multimodal settings.

\section{Additional Experiments}
\subsection{Further Validation Experiments}
\noindent{\bf Experimental Details:} Unlike naive MML, which is trained solely with concatenation, we additionally introduce unimodal losses into multimodal learning as a baseline, following \cite{MMPareto:conf/icml/WeiH24,InfoReg:huang2025adaptive}. This uniform learning enhances unimodal gradient signals and partially alleviates modality imbalance. Moreover, in the validation experiments, introducing unimodal losses facilitates the investigation of the relationship between unimodal task loss and the entropy reduction direction in multimodal learning.

\begin{figure}[t]\centering
\includegraphics[width=.32\linewidth]{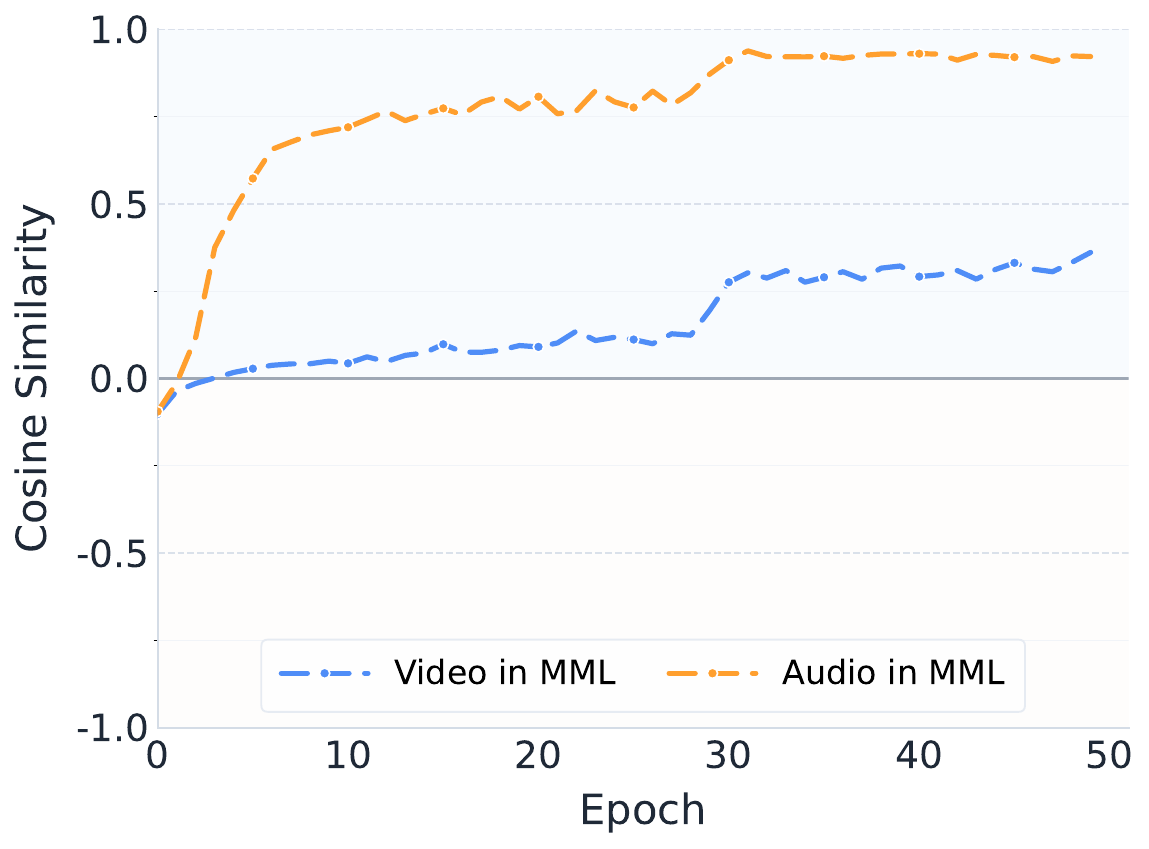}
\includegraphics[width=.32\linewidth]{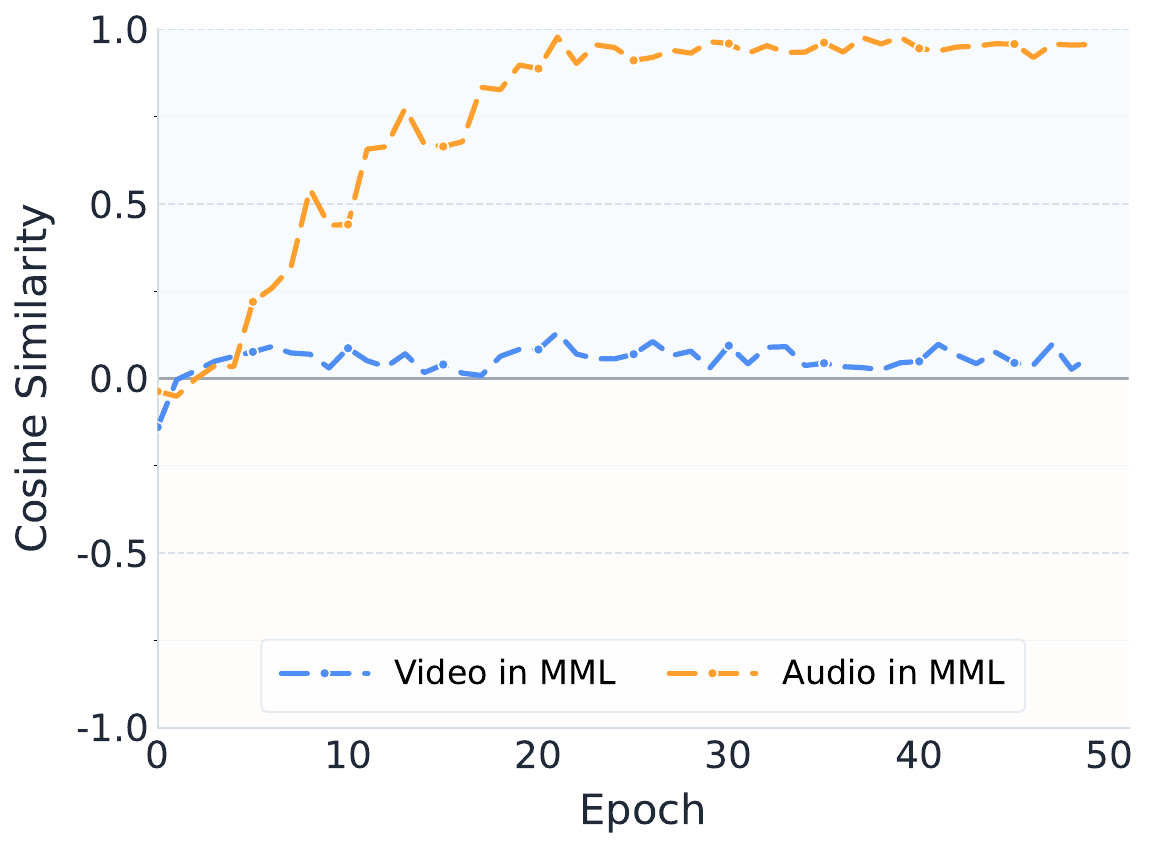}
\includegraphics[width=.32\linewidth]{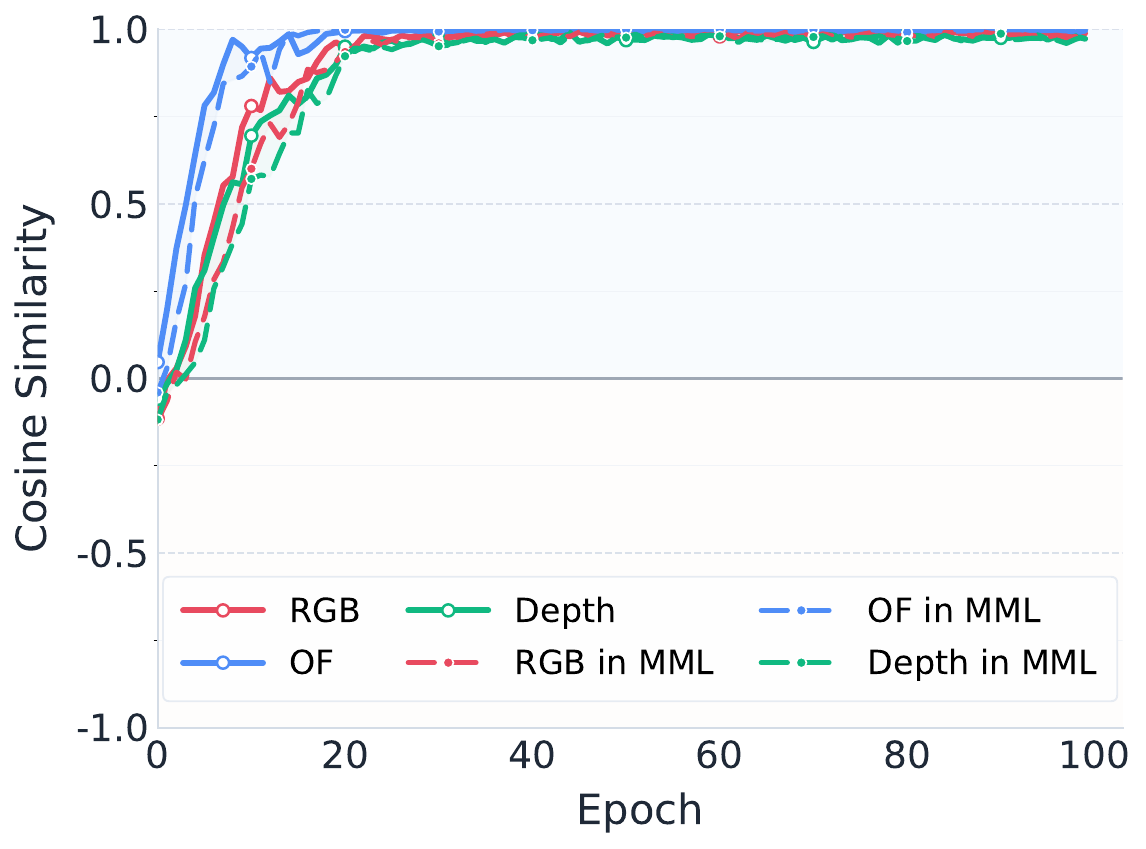} \\
\caption{Additional experiment between the task gradient and the entropy-reduced gradient on Sarcasm (left), Twitter (middle) and NVGesture (right) datasets.}
\label{fig:app_cs}
\end{figure}

\noindent{\bf More Results and Analysis:} We conduct experiments on image-text datasets, namely Sarcasm and Twitter. As shown in Figure \ref{fig:app_cs}, similar to the audio-video datasets, the image-text datasets also exhibit $r_N(\theta) \to0$. A natural question is whether the optimization-certainty decoupling also exists in relatively homogeneous multimodal data, where modality imbalance is less pronounced and the unimodal confidence gap is less evident. To investigate this issue, we further conduct experiments on NVGesture dataset, which consists of three visual modalities, i.e., RGB, optical flow (OF), and Depth. Compared with audio-video settings, these modalities are more homogeneous and exhibit weaker modality discrepancy. As shown in Figure \ref{fig:app_cs}, the optimization-certainty decoupling is not consistently observed throughout training on NVGesture dataset, and the discrepancy mainly appears in the early stage. Nevertheless, MaxCR still improves the performance. This gain may arise because MaxCR does not directly reshape the optimization dynamics, but instead corrects asymmetric confidence behavior. The result indicates that our method remains beneficial beyond the assumption in the analysis, and also generalizes to multimodal settings where modality imbalance is relatively mild.

\subsection{Selection of Monitoring Metric on CREMAD dataset}
Unlike conventional moving average smoothing that emphasizes historical information, multimodal rebalancing learning requires greater attention to the real-time learning state. Our smoothing of monitoring indicators aims to mitigate state instability in the early training stage. Therefore, the smoothing of the sparsity score progressively reduces the influence of historical information. Furthermore, to evaluate the effectiveness of the sparsity metric, we conduct experiments on CREMAD dataset. As shown in Table \ref{tab:sparsity}, the sparsity metric achieves the best performance.

\subsection{Incorporating Optimization Dynamics}
MaxCR focuses on correcting the asymmetric semantic confidence between strong and weak modalities, whereas methods such as OGM aim to rebalance optimization dynamics by modulating their gradient signals. These two mechanisms operate at different levels and are therefore compatible. To verify this compatibility, we further combine MaxCR with OGM, where OGM is used to regulate the gradient updates, and MaxCR is used to regularize their prediction confidence. This combined design allows multimodal learning to benefit from both balanced optimization dynamics and improved modality semantic confidence. We conduct experiments on KSounds dataset. As shown in Table \ref{tab:ogm_maxcr}, introducing OGM for gradient modulation can further improve model performance.

\begin{table}[t]
\centering
\begin{tabular}{cc}
\begin{minipage}{.44\textwidth}
\caption{Performance with different monitoring metrics on CREMAD dataset.}
\label{tab:sparsity}
\centering
\renewcommand\arraystretch{1.3}
\resizebox{\textwidth}{!}{
\begin{tabular}{c|ccc}
\toprule
Method     &  Audio & Video & Multimodal  \\\midrule
Confidence      & 0.6788 & 0.6613 & 0.8427 \\
Entropy     & 0.6815 & 0.6694 & 0.8468 \\
Sparsity (Our)      & \bf 0.6855 & \bf 0.6828 & \bf 0.8531 \\
\bottomrule
\end{tabular}
}
\end{minipage} &
\begin{minipage}{.50\textwidth}
\caption{Performance of MaxCR with OGM on KSounds dataset.}
\label{tab:ogm_maxcr}
\centering
\renewcommand\arraystretch{1.3}
\resizebox{\textwidth}{!}{
\begin{tabular}{c|ccc}
\toprule
Method     &  Audio & Video & Multimodal  \\\midrule
OGM      & 0.5013 & 0.3165 & 0.6582 \\
MaxCR      & 0.5570 & \bf 0.5605 & 0.7522 \\
MaxCR+OGM     & \bf 0.5651 & 0.5593 & \bf 0.7530 \\
\bottomrule
\end{tabular}
}
\end{minipage} 
\end{tabular}
\end{table}

\subsection{Analysis on Semantic Confidence}
To assess MaxCR's impact on semantic confidence, we evaluate the mean top-1 predictive confidence and average number of candidate classes (classes with probabilities exceeding the mean). Assuming $C$ total classes, we experiment on the CREMAD dataset ($C=6$). As reported in Table \ref{tab:conf}, the results demonstrate that MaxCR enhances confidence and reduces candidate classes for video modality, while moderately suppressing confidence for audio modality.

\begin{table}[!htbp]
\centering
\caption{Comparison of Acc, Top-1 confidence and candidate classes ($>1/C$).}
\label{tab:conf}
\begin{tabular}{r|c|cc|cc}
\toprule
\multirow{2}{*}{Method} 
& \multirow{2}{*}{ACC} 
& \multicolumn{2}{c|}{Top-1} 
& \multicolumn{2}{c}{$>1/C$} \\
\cmidrule(lr){3-4} \cmidrule(lr){5-6}
& & Audio & Video & Audio & Video \\
\midrule
Naive & 0.6361 & \bf 0.9234 & 0.6643 & 1.1694 & 1.6640 \\
OGM   & 0.6612 & 0.9232 & 0.6972 & 1.1788 & 1.5121 \\
MaxCR & \bf 0.8531 & 0.8768 & \bf 0.7783 & 1.2742 & 1.4892 \\
\bottomrule
\end{tabular}
\end{table}

\begin{figure*}[t] 
\centering
\begin{minipage}{.32\linewidth}
\centering
\includegraphics[width=\linewidth]{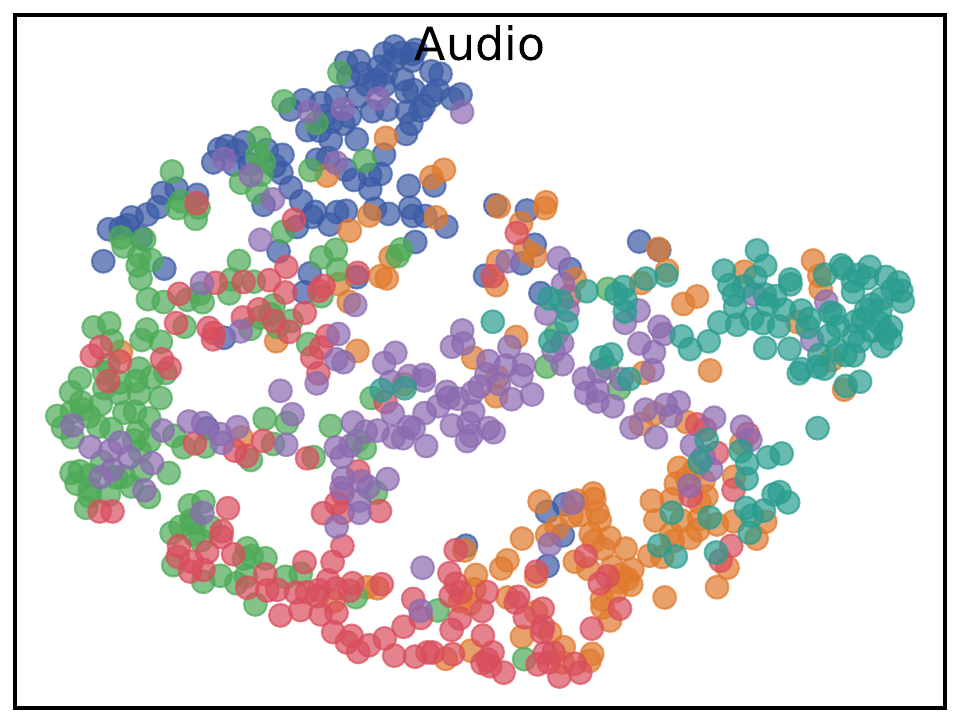}\\
{(a). Naive Audio.}
\end{minipage} 
\begin{minipage}{.32\linewidth}
\centering
\includegraphics[width=\linewidth]{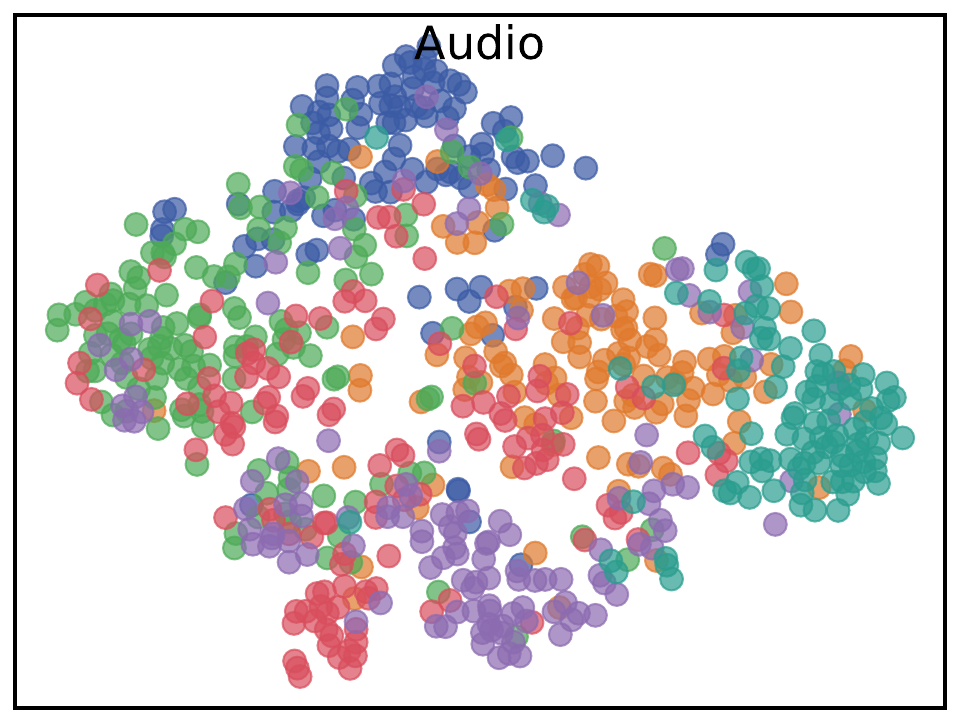}\\
{(b). LFM Audio.}
\end{minipage}
\begin{minipage}{.32\linewidth}
\centering
\includegraphics[width=\linewidth]{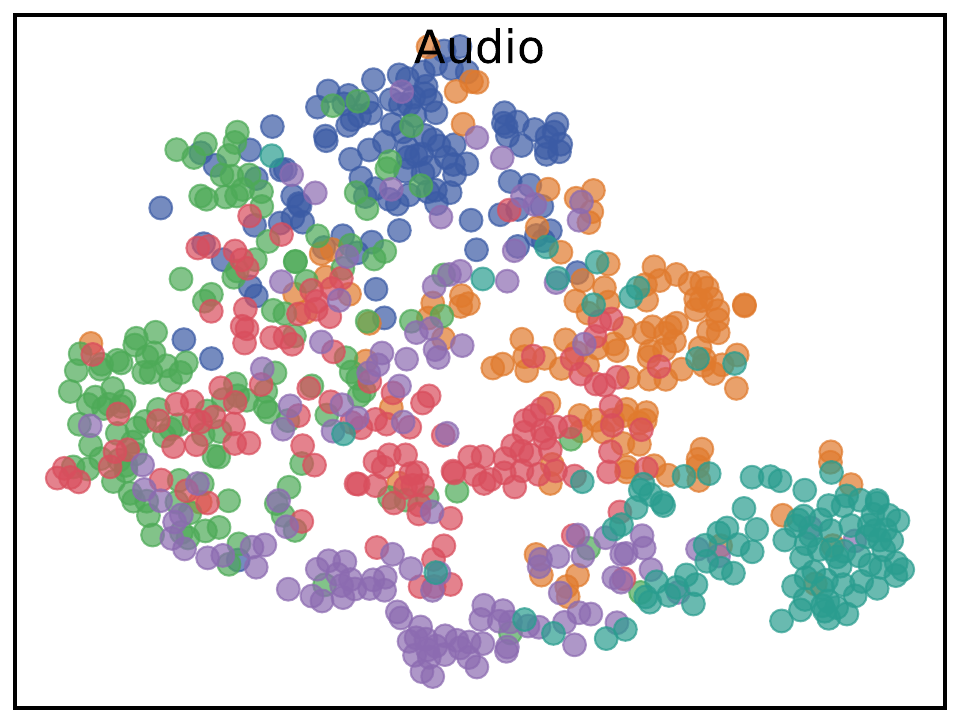}\\
{(c). MaxCR Audio.}
\end{minipage}\\
\begin{minipage}{.32\linewidth}
\centering
\includegraphics[width=\linewidth]{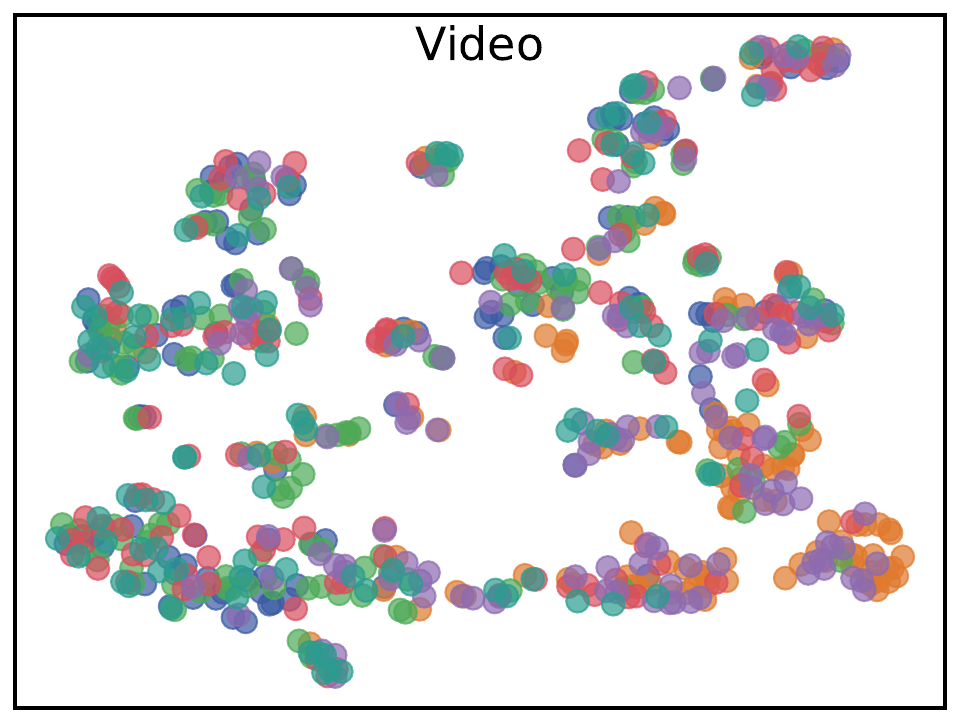}\\
{(d). Naive Video.}
\end{minipage} 
\begin{minipage}{.32\linewidth}
\centering
\includegraphics[width=\linewidth]{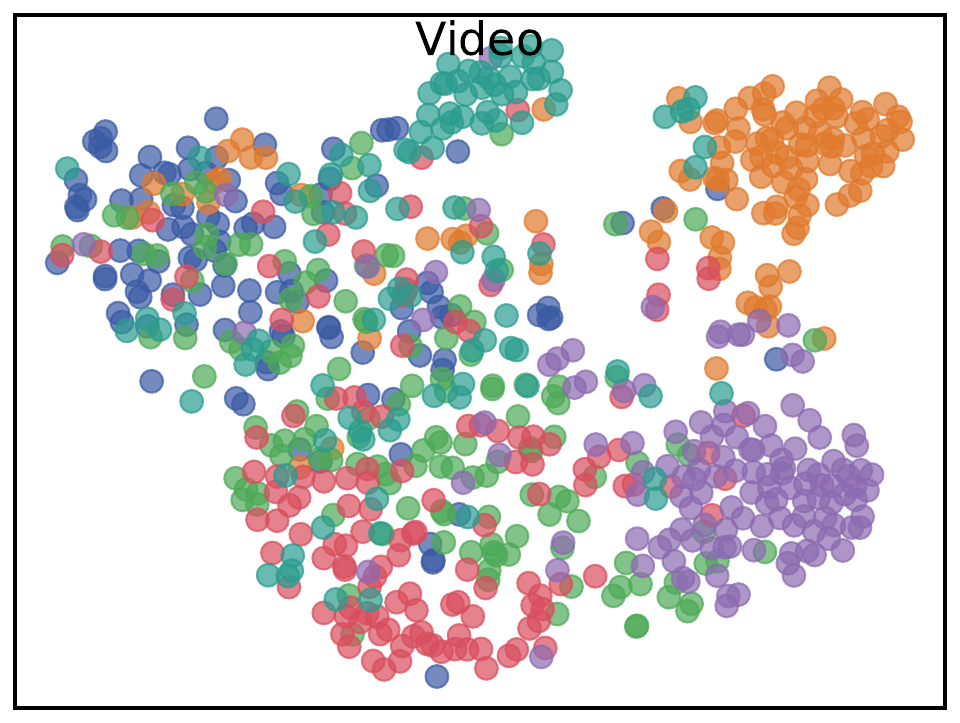}\\
{(e). LFM Video.}
\end{minipage}
\begin{minipage}{.32\linewidth}
\centering
\includegraphics[width=\linewidth]{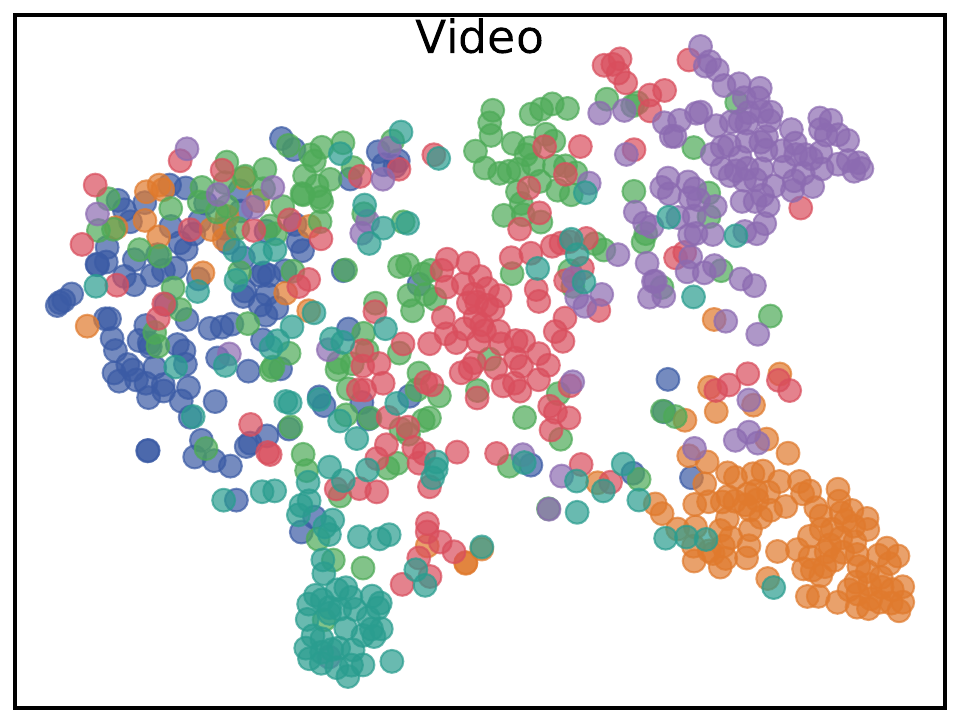}\\
{(f). MaxCR Video.}
\end{minipage}
\caption{The representations of audio and video modalities on CREMAD dataset by t-SNE \cite{tSNE:journal/jmlr/MaatenH08} across different methods are shown.}
\label{fig:visualization}
\end{figure*}

\subsection{Visualization Results}
In Figure \ref{fig:visualization}, we further analyze the proposed MaxCR via t-SNE \cite{tSNE:journal/jmlr/MaatenH08} feature visualization. Specifically, we compare the visualization results of naive multimodal learning (Naive), the label intervention method LFM, and the proposed method MaxCR. It can be observed that the visual features learned by Naive MML do not exhibit clear decision boundaries, which is consistent with its inferior performance. In contrast, LFM and MaxCR produce higher quality representations. Moreover, the video modality features learned by MaxCR form clearer clusters.

\end{document}